\newcommand{\algo}{DEROG}
\theoremstyle{plain}
\newtheorem{theorem}{Theorem}[section]
\newtheorem{proposition}[theorem]{Proposition}
\theoremstyle{definition}
\newtheorem{definition}[theorem]{Definition}
\theoremstyle{remark}
\title{Improving Graph Out-of-distribution Generalization Beyond Causality}
\author{%
Can Xu \\
East China Normal University \\
Shanghai, China \\
\And
Yao Cheng \\
East China Normal University \\
Shanghai, China \\
\And
Jiangxiang Yu \\
East China Normal University \\
Shanghai, China \\
\And
Haosen Wang \\
Southeast University \\
Nanjing, China \\
\And
Jingsong Lv \\
Zhejiang Lab \\
Hangzhou, China \\
\And 
Yao Liu \\
East China Normal University \\
Shanghai, China \\
\And
Xiang Li\thanks{Corresponding author} \\
East China Normal University \\
Shanghai, China \\
}
\begin{document}

\maketitle

\begin{abstract}
    Existing causal methods for graph out-of-distribution (OOD) generalization primarily assume the causal relationships between invariant sub-graphs and labels, which neglect the unignorable role of environment. However, in complex real-world applications, the assumption is easily violated. To address the problem, this paper presents the theorems of environment-label dependency and mutable rationale invariance, where the former characterizes the usefulness of environments in determining graph labels while the latter refers to the mutable importance of graph rationales. Based on analytic investigations, a novel variational inference based method DEROG (Probability \textbf{D}ependency on \textbf{E}nvironments and \textbf{R}ationales for \textbf{O}OD \textbf{G}raphs) is introduced. To alleviate the adverse effect of unknown prior knowledge on environments and rationales, \algo\ utilizes generalized Bayesian inference. Further, \algo\ employs an EM-based algorithm for optimization. Finally, extensive experiments on benchmark datasets under different distribution shifts are conducted to show the superiority of \algo. 
    Our code is publicly available at \url{https://github.com/LEOXC1571/DEROG}.
\end{abstract}

\section{Introduction}
\label{sec:introduction}

Generalizing graph neural networks (GNNs) to out-of-distribution (OOD) graphs has recently received increasing attention. Most existing studies~\cite{isgib_23_yang,stablegnn_24_fan} tackle the problem from the perspective of causal inference and invariant learning, which aim at dissecting graphs into two non-overlapping parts, i.e., \emph{causal sub-graphs} and \emph{spurious sub-graphs}. 
Despite the difference in generating spurious sub-graphs, previous methods 
\cite{ciga_22_chen,dir_22_wu,dive_24_liang} generally assume that causal sub-graphs are invariant under distribution shifts and are used to determine properties / labels of graphs, while spurious sub-graphs characterize different distribution shifts and exert marginal impact on graph label prediction. 
However, in practice, the relationships among causal sub-graphs (graph rationales\footnote{In causal methods, they use causal sub-graphs to denote the graph rationales. Since our proposed method is not causal-based, we directly use graph rationales for distinction.}), environments, and labels are intricate. Therefore, existing methods could encounter major challenges when applied in complex scenarios. 

A crucial real-world application for graph OOD generalization is molecular learning. 
To illustrate, Fig.~\ref{fig:drug_bind} shows an example of the process for structurally specific drugs to take effect. At the stage of absorption and distribution, for drugs with the same pharmacophore (i.e., Barbiturates) as the sub-structure with the maximum efficacy information, both $\mathrm{R}^{1}$ and $\mathrm{R}^{2}$ are motifs that are considered superfluous according to previous studies \cite{ciga_22_chen}. In fact, $\mathrm{R}^{1}(\mathrm{R}^{2})$ determines whether drugs are able to reach target receptors and serves as the antecedent of valid drugs. This indicates that instead of being independent, properties of graphs are dependent on environments at certain level in real-world scenarios. At the metabolism stage, once drugs with the same pharmacophore successfully reach target receptors, factors like experimental assays, electron density distributions, geometry conformations, and metabolites also influence the validity of drugs. This demonstrates that causal sub-graphs could fail to exhibit rigorous invariance across different environments in predicting properties of graphs.


\begin{figure*}[tb]
    \centerline{\includegraphics[width=0.85\linewidth]{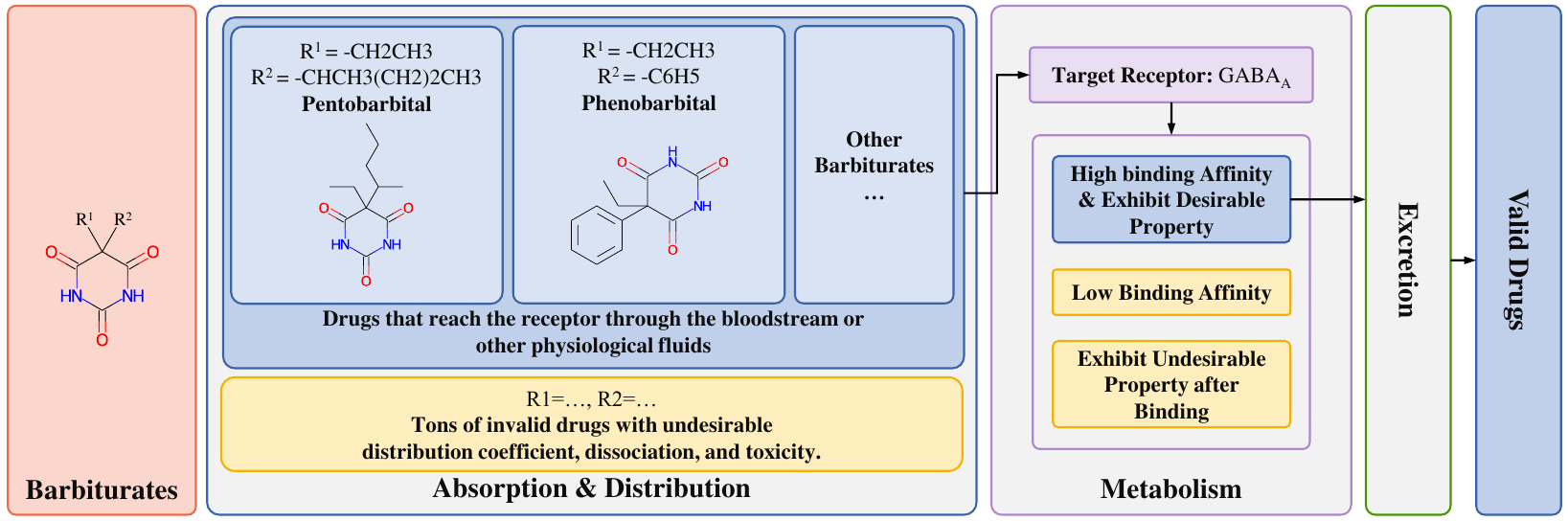}}
    \caption{The process of drugs to take effect 
    through binding with receptors. 
    This figure demonstrates 
    that there are a number of factors other than pharmacophore that determine validity and properties of drugs in real-world scenarios.}
    \label{fig:drug_bind}
    \vskip -0.30in
\end{figure*}

In this paper, we delve into graph OOD generalization beyond causality and 
propose a probabilistic graphical model \algo, which \textbf{D}epends on both \textbf{E}nvironments and \textbf{R}ationales to predict labels for \textbf{O}OD \textbf{G}raphs. Specifically, we model environments and graph rationales as hidden variables to capture the shift and consistency of data distributions, respectively. Different from previous causal methods that assume the independence of labels on environments and the rigorous invariance of rationales, \algo\ characterizes the complex relationships among environments, rationales, and graph labels from three aspects. First, the distribution of environments is characterized as the non-linear transformation of the joint distribution of graphs and labels. Second, due to the loose invariance of causal sub-graphs across various environments, we construct rationales based on both graphs and environments. Third, since environments can also determine labels, we jointly utilize environments, rationales, and graphs to predict overall properties.

Technically, \algo\ adopts an Expectation-Maximization (EM) based optimization framework. In the E-step, the parameters w.r.t. environments and rationales are optimized, while in the M-step, environments and rationales are fixed and used to improve the final classifier. Due to the limited knowledge regarding the prior distributions of environments and rationales, we leverage the generalized Bayesian inference technique, which substitutes the KL-divergence terms in the objective function with negative entropy. Further, we upgrade \algo\ by adding an environment alignment loss and a node-level contrastive loss, respectively. For the former, each graph could have a corresponding environment label that can be used to generate high-quality environment embeddings. For the latter, we sample positive and negative nodes based on the influence they exert in the final classifier, and the optimization of the contrastive loss leads to more expressive graph rationales. Finally, we summarize our main contributions as follows:

    \noindent $\bullet$
    We perform systematic analyses on the relationships among environments, graph rationales, and labels in the context of graph OOD generalization. 
    We identify two pitfalls when performing environment inference and sub-graph extraction, and propose \emph{environment-label dependency} and \emph{mutable rationale invariance} that better suit the conditions on real-world data. 
    
    \noindent $\bullet$
    We present a novel probabilistic graphical model based on both environment-label dependency and mutable rationale invariance.
    We leverage generalized Bayesian inference for hidden variables and put forward an EM-based framework for optimization.
    
    \noindent $\bullet$
    We conduct extensive experiments to evaluate the efficacy of \algo\ on various real-world benchmark datasets under different distribution shifts. Our results show the significant advantages of \algo\ over other competitors. Also, \algo\ can perform well on synthetic benchmarks, although these datasets are specially tailored for different SOTA models.

\section{Related Work}
\label{sec:relatedwork}
The quest for a more robust and effective deep neural network leads to wide research interest on OOD generalization. Common methods for OOD generalization include robust optimization~\cite{vrex_21_krueger,cnc_22_zhang}, domain adaption~\cite{dann_16_ganin,coral_16_sun,domaingen_23_wang} and invariant learning~\cite{causal_16_peters,irm_20_arjovsky,eiil_21_creager}. 

For the graph OOD problem, a number of causal inference methods~\cite{lisa_23_yu,dir_22_wu,gsat_22_miao,gsplice_24_li} aim at extracting invariant parts of graphs and then maximizing the correlations between these sub-structures and labels. For example, LECI~\cite{leci_23_gui} makes thorough exploration on the independence relation among invariant sub-graphs, environments and labels. Instead of focusing on extracting causal sub-graphs, some studies explore the roles of environments and spurious sub-graphs in 
invariant learning. GIL~\cite{gil_22_li} addresses the importance of inferring environment labels, which provide prior assumptions on distributions. GALA~\cite{gala_23_chen} addresses two pitfalls in causal learning and combines CIGA~\cite{ciga_22_chen} with pre-trained ERM to generate faithful environments. On node-level tasks, CaNet~\cite{canet_24_wu} points out that relations between ego-graphs and labels exhibit instability caused by confounding bias of latent environments. Therefore, a pseudo environment estimator is introduced to mitigate the gap.
MoleOOD~\cite{moleood_22_yang} 
is a recent variational inference based approach.
However,
it does not take the inter-dependency between variables into account when inferring environments. Also, it assumes consistent invariance of causal substructure across environments.

Further,
there are also 
methods that tackle OOD generalization by data augmentation. 
For example,
AIA~\cite{aia_23_sui} improves performance under covariate shift by augmenting input graphs with various spurious structures. IGM~\cite{igm_24_jia} combines invariant sub-graph with variant patterns to reduce the reliance on accurate environment labels. From a data-centric perspective, G-Splice~\cite{gsplice_24_li} proposes an environment-aware OOD graph generation method with non-Euclidean-space linear extrapolation designed in both structual and feature space.


\section{Preliminary}
\label{sec:preliminary}

\subsection{Problem Formulation}
\label{subsec:prob_form}
Let $\mathcal{Y}$, $\mathcal{G}$, and $\mathcal{E}$ denote graph label set, graph set, and environment set, respectively. The problem of graph OOD generalization aims to predict labels $Y \in \mathcal{Y}$ of corresponding graphs $G \in \mathcal{G}$ collected from various environments $E \in \mathcal{E}$. Distribution shifts on {$\mathrm{P}(G, Y) = \mathrm{P}(G)\mathrm{P}(Y|G)$} can be categorized into \emph{covariate shift} and \emph{concept shift}, depending on whether the shift occurs on $\mathrm{P}(G)$ or $\mathrm{P}(Y|G)$. Since shifts on the joint distribution $\mathrm{P}(G, Y)$ exist between training and test set, an optimal GNN classifier $\Phi(G)$ is supposed to predict labels with the minimum amount of risks, which is represented as $\Phi(*) = \arg\min \mathbb{E}_{\mathcal{G}, \mathcal{Y}}[l(\Phi(G), Y)]$.

\subsection{Preliminary Study}
In Section~\ref{sec:introduction}, we claimed that causal sub-graphs cannot independently determine properties of graphs under distribution shifts by a toy example in computational chemistry. 
We next further substantiate the following two claims with preliminary experimental results. 
First, causal sub-graph extraction is not a sufficient condition for desired OOD generalization performance. 
Second, 
the capability of SOTA causal learning models 
in extracting causal sub-graphs
does not meet the expectations of researchers when applied to complex real-world datasets. 

As the current best performer in graph OOD domain, LECI~\cite{leci_23_gui} extracts causal sub-graphs through the following process. 
First, it uses a GNN to obtain node representations. Then, it combines node representations from both ends of an edge and aggregates them through a Multi-Layer Perceptron (MLP) to derive a one-dimensional edge score. Subsequently, 
such edge score is passed through a Gumbel-Sigmoid activation function to obtain a mask weight that is close to 0 or 1, which can be used to well separate causal sub-graphs from spurious ones.
Ideally, 
for edges that connect nodes in causal sub-graphs, 
their weights should be close to 1, while that of other edges in the graph 
should be close to 0.
Finally,
these edge mask weights are used to control the strength of message propagation in subsequent GNN classifiers. 

We next 
inspect edge mask weights derived by LECI under different distribution shifts on three real-world graph OOD benchmark datasets.
Details on these distribution shifts and datasets will be given in Sec.~\ref{sec:experiments}.
Table~\ref{tab:edge_weight_leci}
summarizes the \emph{min}, \emph{max} and \emph{mean} values of mask weights w.r.t. all the edges for graphs in the test set.
From the table, we see that LECI assigns relatively large weights to all edges and the variance is small in all cases.
Notably, in the HIV-Scaffold-Covariate experiment, LECI assigns identical weights to 166,694 edges across 4,108 graphs from the test set. 
These results show that all the edges are considered important and the whole graphs are considered as causal sub-graphs.
However, 
\textbf{this phenomenon contradicts with 
the presence of distribution shifts}.
Further,
LECI can still achieve the best performance among all causal methods (see results in Table~\ref{tab:real-world_results} later).
All
these observations show that 
superior OOD generalization performance does not necessarily require causal sub-graph extraction. 
Additionally, existing causal learning methods are less capable in effectively extracting causal sub-graphs that determine graph properties.
More preliminary study results are given in Appendix~\ref{app:pre}. \emph{These observations motivates us to study non-causal methods that do not distinguish causal and spurious sub-graphs}.

\subsection{Environments and Rationales}
\label{subsec:emp_expl}
\begin{wraptable}{R}{0.45\textwidth}
    \vskip -0.20in
    \centering
    \caption{Statistics of Edge Mask Weights in LECI}
    \label{tab:edge_weight_leci}
    \resizebox{0.45\textwidth}{!}{
        \begin{tabular}{lcccc}
        \toprule
        Dataset & \multicolumn{2}{c}{HIV} & LBAP-core-IC50 &Twitter \\
        \cmidrule(r){1-1} \cmidrule(l){2-5}
        Domain & Scaffold & Size & Assay &Length \\
        \cmidrule(r){1-1} \cmidrule(l){2-5}
        Shift & Covariate & Concept & Covariate & Concept \\
        \midrule
        Min. & 0.7068 & 0.8392 & 0.9042 & 0.9031 \\
        Max. & 0.7068 & 0.8874 & 0.9768 & 0.9957 \\
        Mean & 0.7068 & 0.8443 & 0.9434 & 0.9822 \\
        \bottomrule
        \end{tabular}
    }
\vskip -0.15in
\end{wraptable}
Previous invariant learning methods encounter two major pitfalls: one is the idealized assumption of independence between environments and labels, 
while the other is the excessive emphasis on the invariance of causal sub-graphs. 
These assumptions can be easily satisfied on most synthetic datasets, where certain sub-graphs determine the overall properties and remaining parts of graphs are completely superfluous. 

However, real-world cases are more complex where these assumptions are more likely to fail (See Fig.~\ref{fig:drug_bind}).
Therefore, this paper places great importance on the influence of environments on invariant graph pattern learning and we consider $E$ as a variable highly correlated with the joint distribution $(G, Y)$. Formally, we have:
\begin{proposition}
    According to the definition of covariate shift and concept shift~\cite{leci_23_gui}, as elaborated in Proof~\ref{proof:prop_env_gy}, shifts on distributions of environment reflect shifts on distributions of samples.
    We have $\mathrm{P}(E) := \mathrm{P}(G)$ under covariate shift and $\mathrm{P}(E) := \mathrm{P}(Y|G)$ under concept shift. Therefore, $\mathrm{P}(E) := \mathrm{P}(G,Y)$ holds for distribution shifts on graphs.
    \label[proposition]{prop:env_g_y}
\end{proposition}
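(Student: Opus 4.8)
The plan is to regard the environment variable $E$ as the latent index that, for each observed pair, selects the data-generating distribution from which that sample is drawn, and then to show, regime by regime, that this index is in bijective correspondence with the component of $\mathrm{P}(G,Y)$ that actually shifts. Accordingly I first fix the meaning of the symbol ``$:=$'': I read it not as numerical equality of two probabilities but as an \emph{identification}, i.e.\ $\mathrm{P}(E)$ is a (non-linear) deterministic transform of, and carries exactly the information contained in, the indicated sample distribution. Pinning this interpretation down at the outset is essential, because once it is fixed the statement becomes largely a matter of tracking which factor of $\mathrm{P}(G,Y)$ encodes the environment.

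Next I would recall the two definitions from~\cite{leci_23_gui} and treat them separately. Under \emph{covariate shift}, the labelling mechanism $\mathrm{P}(Y\mid G)$ is held invariant across environments while only the marginal $\mathrm{P}(G)$ varies; hence the sole feature that distinguishes environment $e$ from $e'$ is the marginal $\mathrm{P}(G\mid E=e)$, the environment is recoverable from (and only from) $\mathrm{P}(G)$, and the induced distribution over environments is a transform of it, giving $\mathrm{P}(E):=\mathrm{P}(G)$. The \emph{concept shift} case is symmetric: with $\mathrm{P}(G)$ invariant, environments differ solely through $\mathrm{P}(Y\mid G,E=e)$, so the same argument yields $\mathrm{P}(E):=\mathrm{P}(Y\mid G)$.

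Finally I would combine the two regimes through the factorization $\mathrm{P}(G,Y)=\mathrm{P}(G)\,\mathrm{P}(Y\mid G)$. The argument is that an arbitrary distribution shift decomposes into a covariate component on the first factor and a concept component on the second, so that whatever moves across environments is captured jointly by the pair; since $E$ must encode everything that varies, and in the general case both factors may vary, $E$ is identified with the full joint, which is exactly $\mathrm{P}(E):=\mathrm{P}(G,Y)$.

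The hard part will be this last combination step rather than the two base cases, which are essentially definitional. Three points need care: ensuring the interpretation of ``$:=$'' is applied consistently so the chain of identifications is not circular; arguing that a \emph{general} shift is exhausted by the covariate-plus-concept decomposition of the factorization (rather than introducing some component captured by neither factor); and checking that the two partial identifications compose without ``double counting'' under the product structure, so that identifying $E$ with both factors simultaneously is the same as identifying it with the joint $\mathrm{P}(G,Y)$.
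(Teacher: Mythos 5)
Your proposal follows essentially the same route as the paper's own proof: treat the two base cases as definitional (the environment, being the variable that describes the shift, is identified with $\mathrm{P}(G)$ under covariate shift and with $\mathrm{P}(Y|G)$ under concept shift) and then combine them through the factorization $\mathrm{P}(G,Y)=\mathrm{P}(G)\,\mathrm{P}(Y|G)$. The paper's argument is in fact terser than yours---it states the combination in one line without addressing the exhaustiveness or ``double counting'' concerns you flag---so your additional care only makes explicit what the paper leaves implicit.
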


The widely used real-world datasets for graph OOD generalization are molecule and NLP-related datasets~\cite{good_22_gui,drugood_23_ji}. For the former, we have illustrated the correlations between environments and labels, and the inconsistent invariance of causal sub-graphs in Fig.~\ref{fig:drug_bind} with an illustration of the process of drug efficacy. Compared to other structural components within a molecule, pharmacophores indeed play a decisive role in binding with receptors and triggering desirable properties. But the efficacy of molecules with the same pharmacophoric group is dependent on a number of influential factors. For the latter, text sequences are converted into graphs where nodes represent words and edges stand for correlations between words. Due to the existence of irony, paradox, and contextual information, phenomena that labels are not entirely determined by invariant sub-graphs are also widely observed. To sum up, graph rationales are not the sole decisive factor for labels and environments are not completely irrelevant to labels either. Therefore, we next draw Theorem~\ref{theo:dep_on_env} and~\ref{theo:var_inv} that describe the non-negligible dependency of labels on environments and the mutable invariance of graph rationales across varying environments, respectively.

\begin{theorem}
    (\textbf{Environment-label dependency}) Let $(G_i, Y_i)$ and $(G_j, Y_j)$ denote the graph-label pairs sampled from the training and test sets, respectively. Further, $\mathrm{Q}_{\Phi_{\hat{G}}}$ represents the invariant rationale extractor optimized on the training set. Given the rationales $\hat{G}_{i} = \mathrm{Q}_{\Phi_{\hat{G}}}(G_{i})$ and $\hat{G}_{j} = \mathrm{Q}_{\Phi_{\hat{G}}}(G_{j})$, there exists an optimal $\mathrm{Q}_{\Phi_{\hat{G}}}$ such that $\Phi^*_{\hat{G}} = \arg\max I(\hat{G}_i; Y_i)$ and $I(G_j \setminus \hat{G}_j; Y_j) > 0$.
    \label{theo:dep_on_env}
\end{theorem}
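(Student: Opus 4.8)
The plan is to read the statement as a consequence of distribution shift together with the chain rule for mutual information, using \cref{prop:env_g_y} to localize the environment information inside the non-rationale complement $G\setminus\hat{G}$. First I would fix notation: the extractor $\mathrm{Q}_{\Phi_{\hat{G}}}$ splits each graph $G$ losslessly into a rationale $\hat{G}$ and a complement $G\setminus\hat{G}$, so that $(\hat{G},G\setminus\hat{G})$ jointly recovers $G$. I would then take $\Phi^*_{\hat{G}}$ to be any maximizer of $I(\hat{G}_i;Y_i)$ on the training distribution; such a maximizer exists because mutual information is bounded above by $H(Y_i)$ and the feasible family of extractors is nonempty. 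This discharges the first clause $\Phi^*_{\hat{G}}=\arg\max I(\hat{G}_i;Y_i)$ by construction, so the whole content reduces to establishing the second clause $I(G_j\setminus\hat{G}_j;Y_j)>0$ on the test set.

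For the second clause I would argue by contradiction. Suppose $I(G_j\setminus\hat{G}_j;Y_j)=0$, i.e.\ the complement carries no label information on the test set. Since the partition is lossless, the chain rule gives
\[
I(G_j;Y_j)=I(\hat{G}_j;Y_j)+I(G_j\setminus\hat{G}_j;Y_j\mid\hat{G}_j),
\]
and combined with the data-processing inequality $I(\hat{G}_j;Y_j)\le I(G_j;Y_j)$ and the nonnegativity of mutual information, the assumption would force $Y_j$ to be informationally determined by $\hat{G}_j$ alone, making $\hat{G}$ a sufficient statistic for $Y$ under both the training and the test law. I would then contradict this using the shift: by \cref{prop:env_g_y}, a nontrivial shift on $\mathrm{P}(G,Y)$ is mirrored by a shift on $\mathrm{P}(E)$, and because the invariant extractor is frozen after being optimized on the training distribution, the frozen rationale $\hat{G}_j$ can only encode the training relationship between rationale and label. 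The residual, environment-dependent part of $\mathrm{P}(Y\mid G)$ that appears only under concept shift is therefore displaced into the complement $G_j\setminus\hat{G}_j$; if that complement were label-irrelevant on the test set, conditioning on $\hat{G}$ would render $Y$ independent of the environment, contradicting the environment-label dependency supplied by \cref{prop:env_g_y}. Hence $I(G_j\setminus\hat{G}_j;Y_j)>0$.

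The main obstacle I anticipate is bridging the two distributions: the optimality condition is a statement about the training law, whereas the conclusion concerns the test law, so the argument cannot be purely algebraic and must explicitly invoke that the shift is nontrivial and that at least part of the label-relevant information migrates from the rationale into the complement under the shift. To make the contradiction legitimate rather than vacuous, I would state this quantitative shift hypothesis explicitly—namely that under concept shift $\mathrm{P}(Y\mid G)$ genuinely differs between train and test while the extractor is held fixed, so the frozen $\hat{G}_j$ provably cannot absorb the shifted dependency—and then verify that this is precisely what \cref{prop:env_g_y} guarantees. Pinning down this hypothesis, and confirming it is exactly strong enough to drive the contradiction without over-assuming, is the step I expect to require the most care.
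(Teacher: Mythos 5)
There is a genuine gap, and it sits exactly at the step you flagged as needing the most care. Your contradiction hinges on the claim that $I(G_j\setminus\hat{G}_j;Y_j)=0$ together with the chain rule and DPI forces $Y_j$ to be informationally determined by $\hat{G}_j$ alone. This is information-theoretically false: the chain rule gives $I(G_j;Y_j)=I(\hat{G}_j;Y_j)+I(G_j\setminus\hat{G}_j;Y_j\mid\hat{G}_j)$, and your hypothesis annihilates the \emph{unconditional} mutual information $I(G_j\setminus\hat{G}_j;Y_j)$, not the \emph{conditional} term appearing in the decomposition. Conditioning can create dependence: if $Y=A\oplus B$ with $A,B$ independent fair bits, $\hat{G}=A$ and complement $B$, then $I(B;Y)=0$ yet $I(B;Y\mid A)=1$, so $Y$ is in no sense determined by $\hat{G}$. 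Hence assuming the complement is (unconditionally) label-irrelevant does not make $\hat{G}$ a sufficient statistic, and the contradiction you build on top of that never gets started. A second, related problem is that you ask \cref{prop:env_g_y} to supply the ``environment-label dependency'' that closes the argument, but that proposition is purely definitional --- it identifies $\mathrm{P}(E)$ with $\mathrm{P}(G)$, $\mathrm{P}(Y|G)$, or $\mathrm{P}(G,Y)$ depending on the shift type --- and says nothing about residual dependence of $Y$ on $E$ after conditioning on a frozen rationale. That residual dependence is essentially the content of \cref{theo:dep_on_env} itself (and of \cref{theo:var_inv}), so invoking it here is circular.

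The paper's own proof takes a different lever that you never touch: it cites the characterization from GALA and MoleOOD that $I(\hat{G}_i;Y_i)=I(\hat{G}_j;Y_j)$ holds \emph{if and only if} ``Variation Consistency'' and ``Variation Sufficiency'' are satisfied, then denies those two conditions on real-world data (the drug-efficacy examples of \cref{fig:drug_bind}), concluding the strict drop $I(\hat{G}_i;Y_i)>I(\hat{G}_j;Y_j)$ and attributing the deficit to the complement, i.e.\ $I(G_j\setminus\hat{G}_j;Y_j)>0$. So the substantive, non-algebraic input is an empirically grounded failure of cross-environment consistency/sufficiency, not anything derivable from \cref{prop:env_g_y}. (The first clause of your proposal --- existence of a maximizer $\Phi^*_{\hat{G}}$ by boundedness of mutual information --- matches the paper and is fine.) To repair your route you would need to import precisely such an assumption, e.g.\ that under the shift $I(\hat{G}_j;Y_j)<I(G_j;Y_j)$ with the gap realized unconditionally by the complement; but at that point you are assuming a statement nearly as strong as the conclusion, which is exactly the over-assumption you were trying to avoid.
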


Generally speaking, even if optimal graph rationales are obtained by a sub-graph extractor trained on data across multiple environments, the remaining sub-graphs still influence the graph labels at a small but non-negligible level. Put it briefly, the stability of invariance exhibited by graph rationales varies across different conditions. 

\begin{theorem}
    (\textbf{Mutable rationale invariance}) Given a graph $G$ with rationale $\hat{G}$, $\exists \ \mathcal{E}_{sub} \subseteq \mathcal{E}$ and $E_k \in \mathcal{E} \setminus \mathcal{E}_{sub}$, $\forall \ E_i, E_j \in \mathcal{E}_{sub}$, we have $\mathrm{P}(Y|\hat{G}, E_i) = \mathrm{P}(Y|\hat{G}, E_j) \ne \mathrm{P}(Y|\hat{G}, E_k)$.
    \label{theo:var_inv}
\end{theorem}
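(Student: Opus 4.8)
The plan is to prove this existence statement by construction, reading the conclusion as a statement about how the conditional distribution $\mathrm{P}(Y\mid\hat{G},E)$ behaves as a function of the environment $E$. First I would fix the graph $G$ together with its rationale $\hat{G}=\mathrm{Q}_{\Phi_{\hat{G}}}(G)$ and regard $E\mapsto\mathrm{P}(Y\mid\hat{G},E)$ as a map assigning to each environment a conditional label distribution. Partitioning $\mathcal{E}$ into the level sets of this map (grouping all environments that induce identical conditionals) yields a collection of equivalence classes, and the whole statement reduces to establishing two facts: (i) at least one level set contains two or more environments, which I would take as $\mathcal{E}_{sub}$, and (ii) the assignment is non-constant on $\mathcal{E}$, so that some environment lies outside that level set. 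Given (i), the identity $\mathrm{P}(Y\mid\hat{G},E_i)=\mathrm{P}(Y\mid\hat{G},E_j)$ for all $E_i,E_j\in\mathcal{E}_{sub}$ holds by the very definition of a level set, and given (ii) any environment $E_k\in\mathcal{E}\setminus\mathcal{E}_{sub}$ realizes a different conditional, delivering the required inequality.

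For (i), I would appeal to the invariant-learning construction of the rationale. Since $\mathrm{Q}_{\Phi_{\hat{G}}}$ is optimized to maximize $I(\hat{G};Y)$ over data drawn from multiple training environments, the extracted rationale is by construction the part of the graph whose predictive relationship with $Y$ is stable across those environments; equivalently, the environments supporting the rationale share a common conditional $\mathrm{P}(Y\mid\hat{G},E)$. This stable collection supplies a concrete witness for $\mathcal{E}_{sub}$ with $\lvert\mathcal{E}_{sub}\rvert\ge 2$, so the invariance clause holds non-vacuously.

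For (ii), I would invoke \cref{theo:dep_on_env}. That theorem guarantees that even for the optimal rationale the residual sub-graph retains positive label information, $I(G_j\setminus\hat{G}_j;Y_j)>0$. In the structural model adopted throughout the paper the residual (spurious) sub-graph is precisely what encodes the environment, so positive residual information means $Y\not\perp E\mid\hat{G}$, i.e. the map $E\mapsto\mathrm{P}(Y\mid\hat{G},E)$ cannot be constant over all of $\mathcal{E}$. Hence some environment $E_k$ carries a conditional differing from the common value on $\mathcal{E}_{sub}$; because every environment sharing that common conditional was placed inside $\mathcal{E}_{sub}$, such an $E_k$ necessarily lies in $\mathcal{E}\setminus\mathcal{E}_{sub}$, and this yields $\mathrm{P}(Y\mid\hat{G},E_i)\ne\mathrm{P}(Y\mid\hat{G},E_k)$.

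The main obstacle I anticipate is the rigorous bridge in step (ii) between the information-theoretic conclusion of \cref{theo:dep_on_env} and a statement about the conditional distributions themselves: I must argue that identifying the environment with, or as a deterministic function of, the residual sub-graph is legitimate, so that $I(G_j\setminus\hat{G}_j;Y_j)>0$ genuinely transfers to $I(E;Y\mid\hat{G})>0$ and thence to non-constancy of $\mathrm{P}(Y\mid\hat{G},E)$. A secondary subtlety is guaranteeing that the two witnesses are mutually consistent, namely that the environment exhibiting broken invariance is the same object as the $E_k$ sitting outside the stable set; this is handled by defining $\mathcal{E}_{sub}$ as the full level set of the common conditional, so that any environment with a differing conditional is forced into $\mathcal{E}\setminus\mathcal{E}_{sub}$ automatically.
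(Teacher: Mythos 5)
Your proposal follows essentially the same route as the paper's own proof: the paper obtains equality within $\mathcal{E}_{sub}$ by appealing to GALA's ``Variation Consistency'' and ``Variation Sufficiency'' conditions (your level-set construction is a cleaner formalization of that same step, with the training environments supplying the non-trivial witness), and it derives the inequality $\mathrm{P}(Y|\hat{G},E_i) \ne \mathrm{P}(Y|\hat{G},E_k)$ from \cref{theo:dep_on_env}, exactly as your step (ii) does. If anything, your write-up is more explicit than the paper's two-line argument about the one real gap both share — the bridge from the mutual-information statement $I(G_j\setminus\hat{G}_j;Y_j)>0$ to non-constancy of the map $E\mapsto\mathrm{P}(Y|\hat{G},E)$, which requires identifying the environment with (a function of) the residual sub-graph.
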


According to Theorem~\ref{theo:var_inv}, graph rationales possess certain levels of invariance if and only if graphs are drawn from similar environments. Distribution shifts on inputs $(G, Y)$ result in shifts on $P(E)$, which lead to varying degrees of impact of graph rationales on labels across environments. 
Given the complex nature of distribution shifts in real-world scenarios, our method imposes no independence assumptions and makes a relaxed inference on environments and graph rationales.

\section{Methodology}
\label{sec:methodology}


\subsection{The Variational Framework}
\label{subsec:vi_goodg}

\begin{figure*}[tb]
    \centerline{\includegraphics[width=0.8\linewidth]{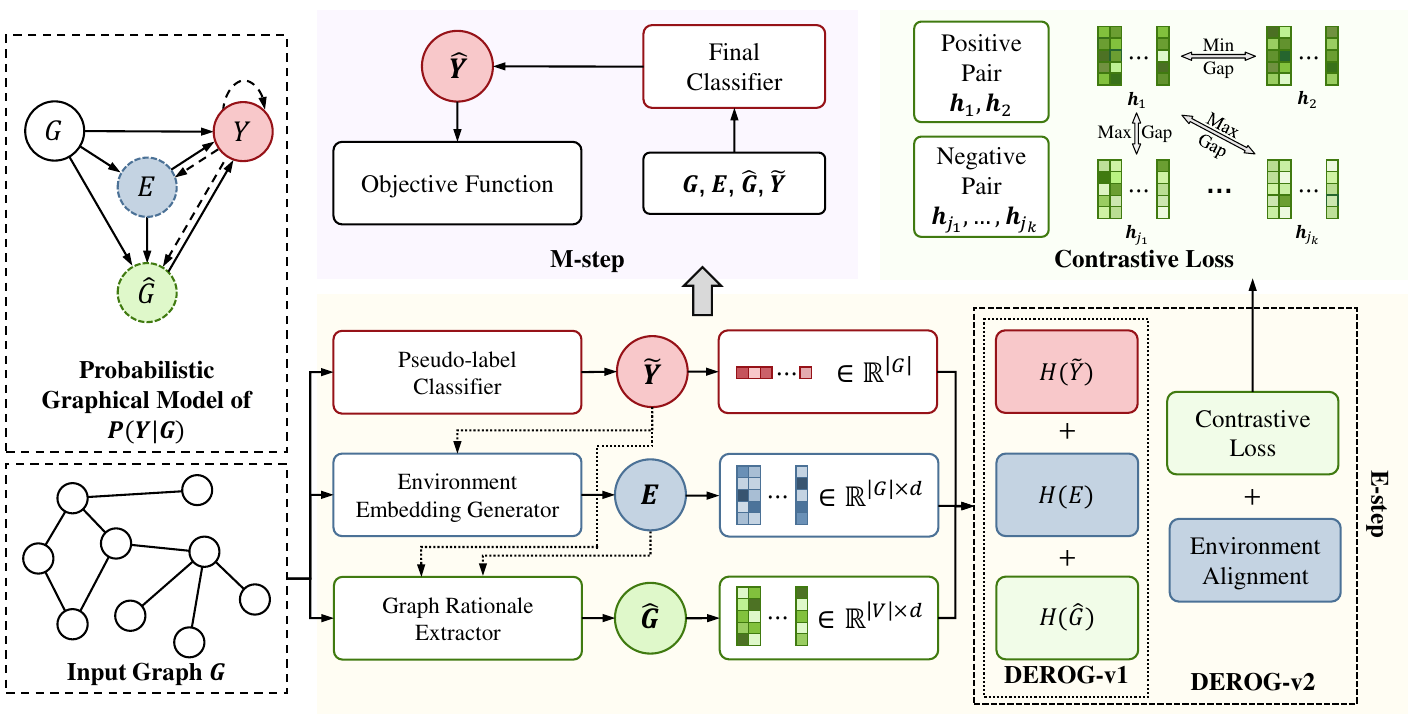}}
    \caption{The overall framework of the proposed method \algo. 
    }
    \label{fig:overall_framework}
    \vskip -0.20in
\end{figure*}

Based on Theorem~\ref{theo:dep_on_env}, and \ref{theo:var_inv}, to capture environment-label dependence and mutable rationale invariance, we propose a novel probabilistic graphical model, which is shown in the upper-left corner of Fig.~\ref{fig:overall_framework}. From Fig.~\ref{fig:overall_framework}, we rewrite $P(Y|G)$ as:
\begin{align}
    \mathrm{P}(Y|G) = \iiint \mathrm{P}(Y | \hat{G}, E, \tilde{Y}, G)\mathrm{P}(\hat{G} | E, \tilde{Y}, G) \mathrm{P}(E | \tilde{Y}, G) \mathrm{P}(\tilde{Y} | G) d\hat{G} dE d\tilde{Y},
\label{eq:pyg}
\end{align}
where $\hat{G}$, $E$, and $\tilde{Y}$ are hidden variables denoting graph rationales, environments, and pseudo-labels, respectively. 
From Proposition~\ref{prop:env_g_y}, we know that $\mathrm{P}(E) := \mathrm{P}(G)$ captures covariate shift while $\mathrm{P}(E) := \mathrm{P}(Y|G)$ describes concept shift. To characterize both shifts, in this paper, we use $\mathrm{P}(E) := \mathrm{P}(G,Y)$. Since Y is to be predicted, we introduce pseudo-labels $\tilde{Y}$ and use $G$ and $\tilde{Y}$ to infer $E$ in Eq.~\ref{eq:pyg}.
Due to the intractability of latent variables regarding $P(Y|G)$, we cannot directly maximize its likelihood. Hence, we resort to the variational inference (VI) technique and maximize the evidence lower bound (ELBO) of $P(Y|G)$.

\begin{proposition}
\label[proposition]{prop:elbo_gy}
    The ELBO of $\mathrm{P}(Y|G)$ takes the form of:
    \begin{align}
    \mathcal{L} = &\mathbb{E}_{\mathrm{Q}_{\Phi_{\hat{G}}}, \mathrm{Q}_{\Phi_{E}}, \mathrm{Q}_{\Phi_{\tilde{Y}}}} \left[ \log \mathrm{P}_{\theta_{Y}}(Y|\hat{G}, E, \tilde{Y}, G) \right] - \mathrm{KL}\left[\mathrm{Q}_{\Phi_{\hat{G}}}(\hat{G}|E, \tilde{Y}, G) || \mathrm{P}(\hat{G}|E, \tilde{Y}, G)\right] - \nonumber\\
    &\mathrm{KL}\left[\mathrm{Q}_{\Phi_{E}}(E|\tilde{Y}, G) || \mathrm{P}(E|\tilde{Y}, G)\right] - \mathrm{KL}\left[\mathrm{Q}_{\Phi_{\tilde{Y}}}(\tilde{Y}|G) || \mathrm{P}(\tilde{Y}|G)\right],
    \label{form:elbo}
    \end{align}
    where $\mathrm{Q}_{\Phi_{\hat{G}}}$, $\mathrm{Q}_{\Phi_{E}}$, and $\mathrm{Q}_{\Phi_{\tilde{Y}}}$ are variational distributions parameterized by $\Phi_{\hat{G}}$, $\Phi_{E}$, and $\Phi_{\tilde{Y}}$, respectively.
    $\theta_Y$ also denotes learnable parameters of the final classifier.
\end{proposition}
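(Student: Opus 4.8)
The plan is to derive the bound in the standard evidence-lower-bound fashion, exploiting the factorization of the joint distribution already fixed by the graphical model in \cref{eq:pyg}. First I would introduce a \emph{structured} variational posterior over the three latent variables that mirrors the chain-rule ordering of the generative model, namely $\mathrm{Q}(\hat{G}, E, \tilde{Y}\mid G) = \mathrm{Q}_{\Phi_{\hat{G}}}(\hat{G}\mid E, \tilde{Y}, G)\,\mathrm{Q}_{\Phi_{E}}(E\mid \tilde{Y}, G)\,\mathrm{Q}_{\Phi_{\tilde{Y}}}(\tilde{Y}\mid G)$. Reading the integrand of \cref{eq:pyg} as the joint $\mathrm{P}(Y, \hat{G}, E, \tilde{Y}\mid G)$ and multiplying and dividing by this $\mathrm{Q}$, the marginal $\mathrm{P}(Y\mid G)$ becomes the expectation $\mathbb{E}_{\mathrm{Q}}\!\left[\mathrm{P}(Y,\hat{G},E,\tilde{Y}\mid G)/\mathrm{Q}(\hat{G},E,\tilde{Y}\mid G)\right]$.

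Next I would apply Jensen's inequality to $\log \mathrm{P}(Y\mid G)$ (equivalently, invoke the identity $\log \mathrm{P}(Y\mid G) = \mathcal{L} + \mathrm{KL}[\mathrm{Q}\,\|\,\mathrm{P}(\cdot\mid Y, G)]$ and discard the nonnegative posterior-KL term), yielding $\log \mathrm{P}(Y\mid G) \ge \mathbb{E}_{\mathrm{Q}}\!\left[\log \frac{\mathrm{P}(Y,\hat{G},E,\tilde{Y}\mid G)}{\mathrm{Q}(\hat{G},E,\tilde{Y}\mid G)}\right] =: \mathcal{L}$. Substituting the generative factorization from \cref{eq:pyg} into the numerator and the matching variational factorization into the denominator, the logarithm splits into four additive pieces: the data term $\log \mathrm{P}_{\theta_Y}(Y\mid \hat{G},E,\tilde{Y},G)$ together with the three log-ratios $\log[\mathrm{P}(\hat{G}\mid E,\tilde{Y},G)/\mathrm{Q}_{\Phi_{\hat{G}}}]$, $\log[\mathrm{P}(E\mid \tilde{Y},G)/\mathrm{Q}_{\Phi_{E}}]$, and $\log[\mathrm{P}(\tilde{Y}\mid G)/\mathrm{Q}_{\Phi_{\tilde{Y}}}]$.

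Finally I would take the expectation under $\mathrm{Q}$ term by term. The data term reproduces the reconstruction expectation $\mathbb{E}_{\mathrm{Q}_{\Phi_{\hat{G}}}, \mathrm{Q}_{\Phi_{E}}, \mathrm{Q}_{\Phi_{\tilde{Y}}}}[\log \mathrm{P}_{\theta_Y}(Y\mid\hat{G},E,\tilde{Y},G)]$ of the statement. For each log-ratio I would integrate out the latent variable it concerns against its own variational factor to convert it into a negative KL divergence, while the remaining variational factors integrate to one because they do not appear in that ratio; this produces exactly the three $-\mathrm{KL}[\cdots]$ terms of \cref{form:elbo}. The main obstacle, and essentially the only place that requires care, is this grouping step: the structured (rather than naive mean-field) form of $\mathrm{Q}$ is precisely what lets each generative factor pair with the identically conditioned variational factor, so that the cross terms cancel and each KL is evaluated at its matching conditioning set. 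I would also flag that the outer expectations over the conditioning latents $E,\tilde{Y}$ are suppressed in the written form, and so I would either fold them in explicitly or remark that \cref{form:elbo} is stated at sampled values of the conditioning variables, which is consistent with the reparameterized/sampled estimator used in the subsequent EM-based optimization of \algo.
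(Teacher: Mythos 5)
Your proposal follows essentially the same route as the paper's proof: apply Jensen's inequality to $\log \mathrm{P}(Y|G)$ with a structured variational posterior $\mathrm{Q}_{\Phi_{\hat{G}}}(\hat{G}|E,\tilde{Y},G)\,\mathrm{Q}_{\Phi_{E}}(E|\tilde{Y},G)\,\mathrm{Q}_{\Phi_{\tilde{Y}}}(\tilde{Y}|G)$, substitute the chain-rule factorizations of both the generative model and the variational posterior, and group the resulting log-ratios term by term into the reconstruction expectation and the three KL divergences. Your closing caveat about the suppressed outer expectations over the conditioning latents $E$ and $\tilde{Y}$ is, if anything, more careful than the paper's own write-up, which silently collapses the triple integrals into single-variable KL integrals without noting this.
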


\textbf{Pseudo-label classifier.} Since graph labels are not accessible during training, we introduce a pseudo-label classifier $\mathrm{Q}_{\Phi_{\tilde{Y}}}$. Specifically, given a graph $G=(A, X)$ with the adjacency matrix $A$ and the node feature matrix $X$, we first use a {GNN} to generate node embeddings and then employ a {Readout} function to derive the graph embedding. After that, with $c$ classes in total, the embedding is further mapped to the probability distribution of pseudo-labels with a MLP. Formally, we have:
\begin{align}
    \tilde{Y} = \mathrm{MLP}(\mathrm{Readout}(\mathrm{GNN}(X, A))) \in \mathbb{R}^{c}.
\end{align}

\textbf{Environment embedding generator.} In our probabilistic graphical model, environments are derived based on both graphs and labels. Therefore, given a graph $G$ with $n$ nodes, to generate the environment embedding, we first enrich node features by concatenating $X$ with the pseudo-label of graph, and then fed into a GNN encoder. Note that since $\tilde{Y} \in \mathbb{R}^c$ is a graph-level feature, before concatenation, we need to expand the dimension of $\tilde{Y}$ and replicate it $n$ times by row. After $L$ layers, we further add an additional gradient reverse layer (GRL)~\cite{grl_13_ganin} to promote the adversarial training of environments. The overall procedure is given as follows:
\begin{gather}
    \mathbf{H}_{E}^{L} = \mathrm{GRL}(\mathrm{GNN}([X, \mathrm{Linear}(\tilde{Y})], A)) \in \mathbb{R}^{n \times d}, \ E = \mathrm{Readout}(\mathbf{H}_{E}^{L}) \in \mathbb{R}^{d},
\end{gather}
where $d$ is the dimensionality of hidden embeddings and $\textrm{Linear}(\cdot)$ represents the linear transformation.

\textbf{Graph rationale extractor.} Once the environment of $G$ is obtained, the graph rationale can be encoded similarly:
\begin{equation}
    \hat{G} = \mathrm{\sigma}(\mathrm{GNN}([X, E, \mathrm{Linear}(\tilde{Y})], A)) \in \mathbb{R}^{n \times d},
\label{eq:rationale}
\end{equation}
where $\sigma$ is the sigmoid function. Similarly, since $E$ and $\tilde{Y}$ are graph-level embeddings, we need to first expand and replicate them before concatenation with $X$. Different from the environment, we represent the graph rationale in the form of node-level representation. This is because nodes contribute differently to the rationale and node-level representation can retain the importance of each node to the rationale.

\textbf{Final classifier.} To implement the final classifier $\mathrm{P}_{\theta_{Y}}$, we jointly leverage environments, graph rationales, and pseudo-labels.
We first use a $L$-layer GNN encoder to generate node embeddings $\mathbf{H}_{Y}^{L}$. Then, $\mathbf{H}_{Y}^{L}$ is further combined with $\hat{G}$ to memorize rationale information. Finally, we take $\mathbf{H}_{Y}^{L}$ and $\hat{G}$ as input, and use an MLP to predict graph labels. Specifically, we have:
\begin{gather}
    \mathbf{H}_{Y}^{L} = \mathrm{GNN}([X, E, \mathrm{Linear}(\tilde{Y})], A) \in \mathbb{R}^{n \times d}, \ \hat{Y} = \mathrm{MLP}(\mathrm{Readout}(\mathbf{H}_Y^L \odot \hat{G})) \in \mathbb{R}^{c}. 
\end{gather}


\subsection{E-step: Inference on Environments, Graph Rationales, and Pseudo-labels}
\label{subsec:estep}
The VI-based graph OOD generalization problem has been formulated as maximizing the ELBO of $\mathrm{P}(Y|G)$, which is given in Eq.~\ref{form:elbo}. In the E-step, we aim to optimize the parameters regarding environments, rationales, and pseudo-labels, which reduces the objective function in Eq.~\ref{form:elbo} to:
\begin{align}
    \max \Big[&-\mathrm{KL}\left(\mathrm{Q}_{\Phi_{\hat{G}}}(\hat{G}|E, \tilde{Y}, G) || \mathrm{P}(\hat{G}|E, \tilde{Y}, G)\right) - \mathrm{KL}\left(\mathrm{Q}_{\Phi_{E}}(E|\tilde{Y}, G) || \mathrm{P}(E|\tilde{Y}, G)\right) - \nonumber \\ 
    &\mathrm{KL}\left(\mathrm{Q}_{\Phi_{\tilde{Y}}}(\tilde{Y}|G) || \mathrm{P}(\tilde{Y}|G)\right)\Big].
    \label{form:estep1}
\end{align}

However, the prior distributions of $\tilde{Y}$, $E$, and $\hat{G}$ are unknown. It is impracticable for us to induce a faithful prior distribution $\mathrm{P}(\cdot|\cdot)$ in a non-parametric manner like some node-level tasks~\cite{glem_23_zhao}. To address the problem, we adopt the core principle of generalized Bayesian inference~\cite{gbi_68_dempster} and substitute the KL divergence with a convex function, which takes the form of negative entropy in this study. Then we have:

\begin{proposition}
    \label[proposition]{prop:gen_baye_inf}
    The optimization objective in the E-step can be transformed into:
    \begin{align}
        \mathcal{L}_{\mathbf{E_{v1}}}& = -\lambda_{\Phi_{\hat{G}}} H(\mathrm{Q}_{\Phi_{\hat{G}}}(\hat{G}|E, \tilde{Y}, G)) - \lambda_{\Phi_{E}} H(\mathrm{Q}_{\Phi_{E}}(E|\tilde{Y}, G)) - \lambda_{\Phi_{\tilde{Y}}} H(\mathrm{Q}_{\Phi_{\tilde{Y}}}(\tilde{Y}|G)),
        \label{eq:entropy}
    \end{align}
    where $H(\cdot)$ is the entropy function.
\end{proposition}

Inspired by $\beta$-VAE~\cite{betavae_17_higgins}, we use $\lambda$ to balance the scale of each term. In our implementations, the ratio between $\lambda_{\Phi_{\hat{G}}}$, $\lambda_{\Phi_{E}}$, and $\lambda_{\Phi_{\tilde{Y}}}$ is fixed at 1:1:10. By directly optimizing the objective $\mathcal{L}_{\mathbf{E_{v1}}}$, we propose the first version of \algo, denoted as \algo-v1.

To further enrich the connotation of latent environments and enhance the salience of important nodes, we further introduce an environment alignment loss and a contrastive loss. For the former, each environment can be generally associated with an id, which serves as the label of distribution shift. Therefore, we adopt the cross-entropy loss to drive the learned environment embeddings to be easily separable. In line with Proposition~\ref{prop:env_g_y}, this term enhances the model's capability in distinguishing various distribution shifts. Given a graph $G$ and its environment id $Y_{E}$, we derive:
\begin{align}
    {\mathcal{L}_{env}} = \mathrm{CrossEntropy} (Y_{E}, \mathrm{Linear}(E)).
\end{align}

On the other hand, it is expected that a well-learned rationale extractor should
learn more (less) information from key (irrelevant) nodes.
To achieve this goal, considering that the learned $\hat{G} \in \mathbb{R}^{n\times d}$ contains the importance of each node to the rationale, we add a contrastive loss, which pushes embeddings of key nodes away from others. 
Given a graph $G$, we consider that nodes that are more likely to contribute to the rationale are positive samples, while others are negative ones. Since the sigmoid function used to generate $\hat{G}$ ensures that each entry in $\hat{G}$ ranges from 0 to 1, we simply measure the importance score of each node $v_i$ by computing the $L_1$ norm of its embedding $\hat{g}_i$. Then nodes are sorted in descending order by their scores and divided into two halves. We next show how to construct positive and negative pairs as follows. We first randomly sample a node $v_1$ as anchor. Then we select a random node $v_2$ as positive sample from the same half where $v_1$ is located. After that, we randomly select $k$ nodes from the other half as negative samples. Finally, the contrastive loss is given by:
\begin{align}
    \mathcal{L}_{cl} = -  \log \frac{\exp (h_{1}^T h_{2} / \tau)}{\sum_{v_{j} \in \mathcal{V}_{-}} \exp (h_{1}^T h_{j} / \tau)},
\end{align}
where $\mathcal{V}_{-}$ represents the negative node set with $k$ samples. We add both $\mathcal{L}_{env}$ and $\mathcal{L}_{cl}$ to $\mathcal{L}_{\mathbf{E_{v1}}}$, and formulate a new loss function:
\begin{align}
    \mathcal{L}_{\mathbf{E_{v2}}} = \mathcal{L}_{\mathbf{E_{v1}}} + \lambda_{env} \mathcal{L}_{env} + \lambda_{cl} \mathcal{L}_{cl}.
    \label{eq:func_v2}
\end{align}
Optimizing $\mathcal{L}_{\mathbf{E_{v2}}}$ leads to the second version of \algo, denoted as \algo-v2.

\subsection{M-step: Classifier Optimization}
\label{subsec:mstep}
In the M-step, we fix 
the learned parameters of $\mathrm{Q}_{\Phi_{\tilde{Y}}}$, $\mathrm{Q}_{\Phi_{\hat{G}}}$, and $\mathrm{Q}_{\Phi_{E}}$, and optimize the final classifier $\mathrm{P}_{\theta_Y}$. According to Proposition~\ref{prop:elbo_gy}, the objective in the M-step is to minimize the negative log-likelihood of $\mathrm{P}(Y|\hat{G}, E, G)$. The loss function can be formally represented by: 
\begin{align}
    \mathcal{L}_{\mathbf{M}} = \mathrm{CrossEntropy}(Y_T, \hat{Y}) ,
\end{align}
where $Y_T$ represents ground-truth graph labels. We iteratively perform E-step and M-step until model convergence. 

\textbf{[Convergence discussion.]}
We next discuss {the convergence of EM algorithm in \algo}. Our objective is to optimize Eq.~\ref{form:elbo}. In both E- and M-steps, the corresponding objectives are optimized towards a direction of decreasing values, which also decreases the value of Eq.~\ref{form:elbo}. Further, since KL divergence of latent variables and negate of reconstruction probability both have a clear lower bound of 0, Eq.~\ref{form:elbo} will also be lower bounded by 0. Therefore, with a clear lower bound and each step optimized towards smaller values, the EM algorithm guarantees convergence in our method. 

\textbf{[Time complexity analysis.]}
While \algo\ adopts an EM-based framework, 
its major time complexity comes from GNNs and MLPs implemented, which are both linear to the number of nodes in the graph.
Therefore, the overall time complexity of our method is still linear to the number of nodes in the graph.
Due to the space limitation,
detailed proofs for Proposition~\ref{prop:elbo_gy} and \ref{prop:gen_baye_inf} can be found in Appendix~\ref{apd:proof} and the pseudocode of \algo\ is given in Algorithm~\ref{alg:p_code_derog} of Appendix~\ref{apd:p_code}.

\section{Experiments}
\label{sec:experiments}
\begin{table*}[t]
    \vskip -0.15in
    \centering
    \caption{Performance comparisons in real-world scenarios. The evaluation metric employed for the two molecule datasets is ROC-AUC, whereas classification accuracy is adopted for NLP datasets. The best result is emphasized in bold while the second-best one is underscored.}
    \label{tab:real-world_results}
    \resizebox{1.00\textwidth}{!}{
    \begin{tabular}{lccccccccccc}
        \toprule
        Task & \multicolumn{7}{c}{Molecule Learning} & \multicolumn{4}{c}{Sentiment Analysis} \\
        \midrule
        Dataset & \multicolumn{4}{c}{GOODHIV} & \multicolumn{3}{c}{DrugOOD-LBAP-core-IC50} & \multicolumn{2}{c}{GOODTwitter} & \multicolumn{2}{c}{GOODSST2} \\
        \cmidrule(r){1-1} \cmidrule(lr){2-8} \cmidrule(l){9-12}
        Domain & \multicolumn{2}{c}{Scaffold} & \multicolumn{2}{c}{Size} & \multicolumn{1}{c}{Scaffold} & \multicolumn{1}{c}{Size} & \multicolumn{1}{c}{Assay} & \multicolumn{2}{c}{Length} & \multicolumn{2}{c}{Length} \\
        \cmidrule(r){1-1} \cmidrule(lr){2-5} \cmidrule(lr){6-8} \cmidrule(lr){9-10} \cmidrule(l){11-12}
        Shift & Covariate & Concept & Covariate & Concept & Covariate & Covariate & Covariate & Covariate & Concept & Covariate & Concept \\
        \midrule
        ERM & 72.01(1.02) & 67.23(2.41) & 63.05(2.80) & 61.88(4.27) & 70.03(0.59) & 67.02(0.21) & 69.93(0.40) & 57.77(0.75) & 45.44(1.85) & 78.94(0.55) & 71.35(0.56) \\ 
        IRM & 71.05(1.25) & 64.05(2.30) & 67.94(1.07) & 70.92(7.60) & 67.78(0.51) & 65.37(0.51) & 71.33(0.73) & 55.89(1.41) & 49.68(3.13) & 80.21(1.13) & 72.17(1.47) \\ 
        VREx & 67.78(1.80) & 65.68(3.42) & 64.29(1.98) & 57.85(1.45) & 68.52(0.46) & 65.09(1.50) & 71.06(0.93) & 57.01(0.08) & 49.93(3.13) & 78.62(1.83) & 71.59(0.20) \\ 
        DANN & 72.45(0.90) & 67.54(2.26) & 64.61(4.41) & 64.23(3.80) & 67.85(1.20) & 65.93(0.57) & 70.86(0.38) & 56.51(1.33) & 48.31(0.65) & 79.71(0.25) & 73.87(1.74) \\ 
        \midrule
        DIR & 70.72(0.73) & 68.59(1.78) & 60.28(2.81) & \underline{76.73(2.83)} & 68.23(0.58) & 63.53(1.41) & 71.28(1.46) & 55.94(1.00) & 49.39(1.39) & 80.02(1.16) & 62.46(2.90) \\ 
        GSAT & 70.96(1.36) & 68.23(1.41) & 63.78(1.36) & 63.94(7.00) & 69.14(0.44) & 66.73(0.36) & 71.94(0.76) & 57.14(0.65) & 48.07(1.78) & 80.78(1.98) & 71.76(0.25) \\ 
        CIGA & 69.19(1.01) & 69.89(0.57) & 61.78(0.45) & 75.90(1.28) & 67.47(1.12) & 64.62(0.90) & 71.35(0.50) & 57.49(1.04) & 49.79(3.85) & 79.67(2.16) & 68.70(1.88) \\ 
        GALA & 68.00(4.88) & \underline{70.93(4.17)} & 60.82(5.48) & 75.18(2.48) & 66.97(0.21) & 65.36(0.42) & 70.46(0.76) & 57.35(07.0) & 49.31(2.41) & 81.36(0.63) & 67.70(0.72) \\ 
        AIA & 67.56(1.96) & 65.18(1.10) & 61.64(3.37) & 69.90(1.37) & 65.50(0.76) & 63.53(0.12) & 70.89(0.81) & 56.20(0.86) & 46.66(1.92) & 78.80(2.13) & 69.43(2.32) \\ 
        iMoLD & 69.90(1.55) & 70.65(0.22) & 63.63(2.07) & 73.76(0.45) & 68.68(0.90) & 66.09(0.27) & 71.00(0.29) & 56.78(1.41) & 48.33(2.99) & 56.78(1.41) & 73.14(0.48) \\ 
        EQuAD & 67.38(1.47) & 63.85(0.84) & 60.86(1.70) & 48.52(2.01) & 64.48(0.75) & 62.93(0.64) & 69.75(0.92) & 57.79(1.25) & 43.81(1.63) & 75.40(1.92) & 67.06(1.38) \\ 
        IGM & 66.20(1.01) & 63.90(2.47) & 61.30(1.82) & 66.00(0.79) & 67.80(1.60) & \textbf{70.80(2.04)} & \textbf{81.60(2.53)} & 57.90(1.44) & 52.70(0.73) & 80.26(2.57) & 72.06(1.32) \\ 
        LECI & 72.50(0.85) & 69.48(1.39) & 64.86(2.40) & 55.91(2.76) & 69.09(0.58) & 66.50(0.40) & 72.62(0.26) & 59.64(0.60) & 53.62(2.66) & \textbf{83.46(0.33)} & 72.97(1.62) \\ 
        \midrule
        \algo-v1 & \underline{75.51(0.66)} & 70.45(1.53) & \underline{68.88(1.28)} & 76.33(1.99) & \underline{70.56(0.18)} & 67.88(0.51) & 72.78(0.22) & \underline{60.15(0.58)} & \textbf{54.90(0.79)} & 81.57(0.17) & \underline{73.58(0.56)} \\ 
        \algo-v2 & \textbf{76.87(0.80)} & \textbf{71.40(1.50)} & \textbf{70.48(0.30)} & \textbf{80.65(8.35)} & \textbf{70.87(0.21)} & \underline{68.07(0.75)} & \underline{73.03(0.21)} & \textbf{60.54(0.43)} & \underline{54.35(0.33)} & \underline{82.00(0.42)} & \textbf{75.48(1.17)} \\ 
        \bottomrule
    \end{tabular}}
    \vskip -0.20in
\end{table*}

\subsection{Setup}
\label{subsec:setups}
To make fair comparison between \algo\ and SOTA methods, experiments are conducted on molecular learning and NLP tasks in GOOD~\cite{good_22_gui} and DrugOOD~\cite{drugood_23_ji}. To further assess the model performance under various distributional shifts, this study presented results from both covariate and concept shift scenarios in the GOOD benchmark. However, since DrugOOD benchmark does not include concept shifts, only the results from covariate shift are reported. 

For fairness, we select representative general OOD baselines and graph-specific baselines. The former includes ERM, IRM~\cite{irm_20_arjovsky}, VREx~\cite{vrex_21_krueger}, 
and DANN~\cite{dann_16_ganin}, while the latter consists of DIR~\cite{dir_22_wu}, GSAT~\cite{gsat_22_miao}, CIGA~\cite{ciga_22_chen}, GALA~\cite{gala_23_chen}, 
AIA~\cite{aia_23_sui}, iMoLD~\cite{imold_23_zhuang}, 
EQuAD~\cite{equad_24_yao}, IGM~\cite{igm_24_jia},
and LECI~\cite{leci_23_gui}. Details of these datasets and baselines are elaborated in Appendix~\ref{apd:exp_detail}. Due to fundamental distinctions between methods that focus on graph-level tasks and node-level ones, baselines for node-level predictions like EAGLE~\cite{eagle_23_yuan}, CaNet~\cite{canet_24_wu}, and MARIO~\cite{mario_24_zhu} are not included. Summary of hyperparameter settings of \algo\ 
are shown in Appendix~\ref{apd:hyper_param_set}.
{Note that differing from previous studies that primarily conduct experiments under covariate shifts, this paper extensively investigates generalizing graphs under more challenging concept shifts on real-world data, thereby providing a thorough  model evaluation}. More details on these two shifts are provided in Proof~\ref{proof:prop_env_gy}. Further,
hyperparameter sensitivity analysis is included in Appendix~\ref{subsec:hyper_param}.

\subsection{Main Results}
\label{subsec:real_sce}
We conduct experiments on real-world benchmarks, which are molecule learning and text sentiment analysis.
Results are given in Table~\ref{tab:real-world_results}. From the table, we observe that:

\textbf{Molecule learning.} \algo-v2 surpasses other competitors on multiple benchmarks, while its simplified version, \algo-v1, also achieves commendable performance, ranking second in most benchmarks. 
All these results verify the effectiveness of environment-label dependency and mutable rationale invariance. Among baselines, LECI achieves the best performance in covariate shift scenarios. It obtains environment through encoding the input graph $G$ into graph-level embedding and imposes a stringent assumption on the independence among causal graphs, environments, and labels. Since covariate shift only happens on $\mathrm{P}(G)$ and affects less on graph labels $\mathrm{Y}$, the independence assumption between environments and labels improves the performance of LECI. However, concept shift happens on $\mathrm{P}(Y|G)$, which means the spuriousness of environment is weakened. As a result, LECI performs poorly under concept shift conditions due to the strict independence assumption on $\mathrm{Y}$ and $\mathrm{E}$. 
While IGM achieves the best results on LBAP-IC50-Assay and -Size benchmarks, it does not perform well on GOOD benchmarks.


\textbf{Text sentiment analysis.} Our methods \algo-v2 and \algo-v1 lead other baselines on all the tasks except GOODSST2 with covariate shift. In this task, as pointed out in~\cite{leci_23_gui}, due to the presence of significant amount of noise in graph labels and environment labels, hyperparameters in LECI that enforce the independence between environment and graph labels are set to very small values or even zero. This again provides evidence for Theorem~\ref{theo:dep_on_env}, which illustrates that environment information could affect graph labels.

To fairly evaluate the effectiveness of \algo, we also conduct experiments on
synthetic benchmark datasets~\cite{dir_22_wu,good_22_gui}, where the results are given in Appendix~\ref{apd:syn_scene} due to the space limitation. Although these datasets are human-crafted towards different methods, our approach \algo-v2 can still achieve the overall best performance, which further shows its superiority against other competitors.

\subsection{Ablation Study}
\label{subsec:abl_stu}
We next comprehensively validate the effectiveness of main components in \algo, including
training techniques, 
regularization terms 
and inferred latent variables.
Variants of \algo\ are categorized into three classes. The first class includes w/ OBI, w/o EM, and w/o GRL. These variants are implemented to validate different techniques used in the model framework.
Specifically,
the w/ OBI means \algo\ with the original Bayesian inference, 
where latent environments and graph rationales are optimized based on KL divergence.
The w/o EM indicates model parameters are updated simultaneously instead of the EM algorithm. 
The w/o GRL is \algo\ with no gradient reverse layer attached after the GNN encoder in the environment embedding generator. Further, the second class of variants consist of w/o $\mathcal{L}_{env}$, w/o $\mathcal{L}_{cl}$, and w/o $H(\tilde{Y})$, which are adopted to measure the importance of each term in the objective function of Eq.~\ref{eq:func_v2} on the overall model performance. To verify the role of the latent environment and graph rationale in model inference, we implement the third class of variants of \algo\ to evaluate the expressiveness of inferred variables. Variants include w/o $E$, w/o $\hat{G}$, and w/o $E \& \hat{G}$. They replace latent environments, graph rationales, or both with standard Gaussian noises, respectively. Note that all the experiments are based on \algo-v2. 

\begin{wrapfigure}{r}{0.55\textwidth}
\begin{minipage}{\linewidth}
    \vskip -0.25in
    \centering
    \caption{Ablation study on \algo}
    \label{tab:ablation}
    \resizebox{\linewidth}{!}{
        \begin{tabular}{lcccc}
        \toprule
        Dataset & \multicolumn{2}{c}{HIV} & Twitter & SST2 \\
        \cmidrule(r){1-1} \cmidrule(l){2-5}
        Domain & Scaffold & Size & Length & Length \\
        \cmidrule(r){1-1} \cmidrule(lr){2-3} \cmidrule(lr){4-4} \cmidrule(l){5-5}
        Shift & Covariate & Concept & Covariate & Concept \\
        \midrule
        ERM & 72.01(1.02) & 61.88(4.27) & 57.77(0.75) & 71.35(0.56) \\
        LECI & 72.50(0.85) & 55.91(2.76) & 59.64(0.60) & 72.97(1.62) \\
        \algo-v1 & 75.51(0.66) & 76.33(1.99) & 60.15(0.58) & 73.58(0.56) \\
        \algo-v2 & \textbf{76.87(0.80)} & \textbf{80.65(8.35)} & \textbf{60.54(0.43)} & \textbf{75.48(1.17)} \\
        \midrule
        w/ OBI  & 73.02(1.89) & 62.67(7.21) & 56.37(1.41) & 71.26(1.04) \\
        w/o EM & 74.39(1.76) & 61.63(2.97) & 56.84(0.68) & 72.70(1.57) \\
        w/o GRL & 75.69(0.77) & 64.77(4.63) & 57.52(0.21) & 73.68(0.83) \\
        \midrule
        w/o $\mathcal{L}_{env}$ & 76.12(0.73) & 77.90(1.23) & 60.31(0.50) & 74.42(1.10) \\
        w/o $\mathcal{L}_{cl}$ & 75.99(0.96) & 77.78(2.40) & 60.25(0.52) & 73.80(1.36) \\
        w/o $H(\tilde{Y})$ & 74.10(1.48) & 63.86(3.19) & 59.83(0.08) & 72.63(0.96) \\
        \midrule
        w/o $E$ & 69.36(0.36) & 58.68(5.26) & 58.00(0.79) & 69.24(1.63) \\
        w/o $\hat{G}$ & 68.94(3.42) & 56.33(7.69) & 57.81(1.55) & 67.68(0.82) \\
        w/o $E \& \hat{G}$ & 61.08(2.34) & 54.48(1.53) & 56.42(1.07) & 64.41(2.19) \\
        \bottomrule
        \end{tabular}
        }
\end{minipage}
\begin{minipage}{\linewidth}
    \vskip 0.10in
    \centering
    \centerline{\includegraphics[width=\linewidth]{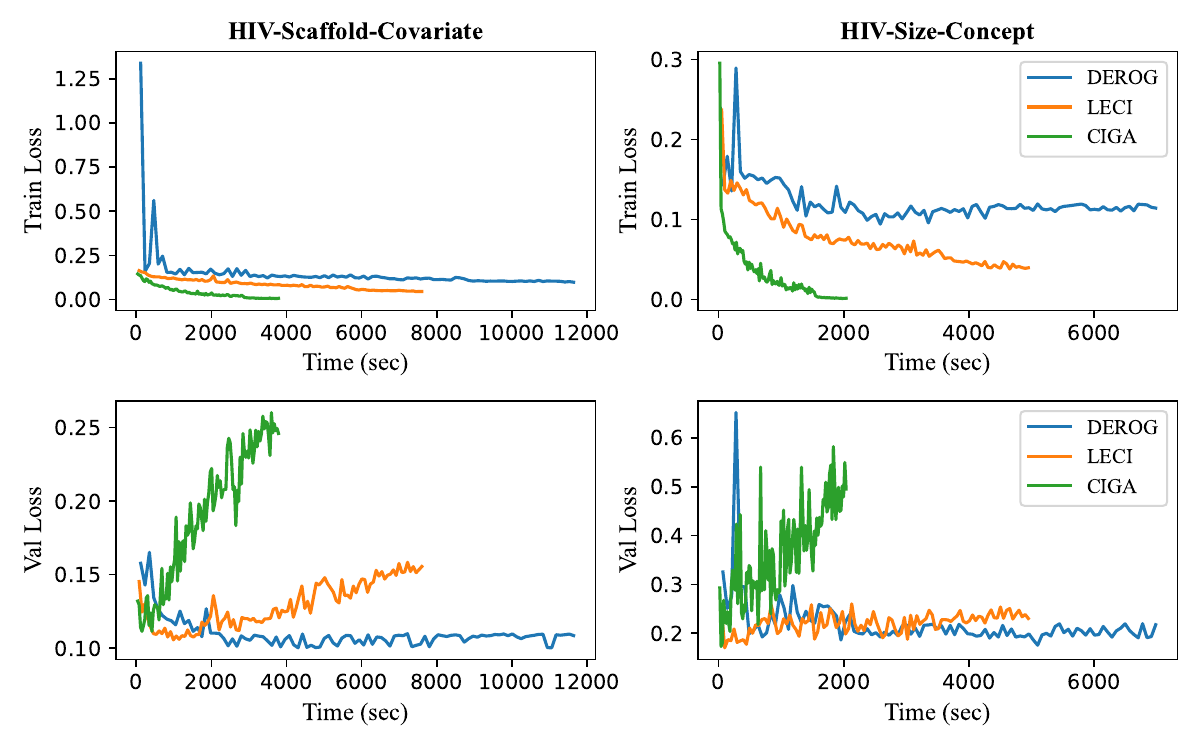}}
    \caption{Efficiency study}
    \label{fig:time_ana}
\end{minipage}
\vskip -0.4in
\end{wrapfigure}
Table~\ref{tab:ablation} summarizes the experimental results.
From a global perspective, we observe that 
\algo\ outperforms all its variants. 
We next provide detailed analysis on these variants.

For the first class of variants with different training techniques,
w/ OBI achieves the worst performance. 
This verifies the rationality of adopting 
generalized Bayesian inference in the absence of reliable prior knowledge. The performance advantage of \algo\ over w/o EM shows the effectiveness of EM algorithm in optimizing variational inference problems. 
The w/o GRL variant also performs worse than \algo, which indicates the importance of gradient reverse layer in latent variable inference. 

For the second class, we see that
the environment alignment (w/o $\mathcal{L}_{env}$) has the minimal impact on overall performances, which leads to the smallest performance drop.
This further demonstrates the small reliance of model performance on the environment labels.
Hence,
{when environment labels are absent, 
we can remove the term $\mathcal{L}_{env}$ from our objective}.
Also, the performance drops of both w/o $\mathcal{L}_{cl}$ and w/o $H(\tilde{Y})$ indicate that the contrastive loss and the negative entropy of pseudo-labels are essential.

For the third class, the replacement of inferred variables with random noises results in significant performance penalties. This shows the necessity of both environments and graph rationales. 
Among them, w/o $\hat{G}$ leads to a larger performance drop than w/o $E$, 
which indicates that graph rationales are of more important. However, environment is also unignorable.

\subsection{Efficiency Study}
\label{subsec:comp_stu}
To investigate the model efficiency, 
we show the training and validation loss of \algo\ over time on two benchmark datasets.
We further take LECI and CIGA as reference 
because the former is the runner-up method while the latter 
is simple and thus efficient.
The results are drawn in Fig.~\ref{fig:time_ana}. 
We observe that 
CIGA is most efficient, while \algo\ converges faster than LECI, in particular, on the HIV-Size-Concept case.
However,
both CIGA and LECI converge to a lower loss on \textbf{in-distribution training set}, but a higher and divergent loss on \textbf{OOD validation set},
while \algo\ stands in the opposite.
The stable convergence curve of \algo\ on the validation set 
further explains its effectiveness 
in graph OOD generalization;
also, it is efficient.

\section{Conclusion}
\label{sec:conclusion}
This paper addresses the  graph OOD generalization problem. We first provided systematic analyses on the restricted applicability of existing invariant learning methods in complex real-world scenarios,
specifically regarding the underlying assumptions made for environments, invariant sub-graphs, and labels. 
After that, we re-evaluated the roles of environment and graph rationales and constructed a probabilistic graphical model \algo\ concerning the conditional distribution of labels. 
Experimental results under complicated real-world distribution shifts show that \algo\ 
can consistently provide superior performance.


\bibliography{bibfile}

\begin{thebibliography}{41}
\providecommand{\natexlab}[1]{#1}
\providecommand{\url}[1]{\texttt{#1}}
\expandafter\ifx\csname urlstyle\endcsname\relax
  \providecommand{\doi}[1]{doi: #1}\else
  \providecommand{\doi}{doi: \begingroup \urlstyle{rm}\Url}\fi

\bibitem[Yang et~al.(2023)Yang, Zheng, Wang, Liu, Huang, Hong, Zhang, and
  Cui]{isgib_23_yang}
Ling Yang, Jiayi Zheng, Heyuan Wang, Zhongyi Liu, Zhilin Huang, Shenda Hong,
  Wentao Zhang, and Bin Cui.
\newblock Individual and structural graph information bottlenecks for
  out-of-distribution generalization.
\newblock \emph{IEEE Transactions on Knowledge and Data Engineering}, pages
  1--12, 2023.

\bibitem[Fan et~al.(2024)Fan, Wang, Shi, Cui, and Wang]{stablegnn_24_fan}
Shaohua Fan, Xiao Wang, Chuan Shi, Peng Cui, and Bai Wang.
\newblock Generalizing graph neural networks on out-of-distribution graphs.
\newblock \emph{IEEE Transactions on Pattern Analysis and Machine
  Intelligence}, 46\penalty0 (1):\penalty0 322--337, 2024.

\bibitem[Chen et~al.(2022)Chen, Zhang, Bian, Yang, Kaili, Xie, Liu, Han, and
  Cheng]{ciga_22_chen}
Yongqiang Chen, Yonggang Zhang, Yatao Bian, Han Yang, MA~Kaili, Binghui Xie,
  Tongliang Liu, Bo~Han, and James Cheng.
\newblock Learning causally invariant representations for out-of-distribution
  generalization on graphs.
\newblock In \emph{Advances in Neural Information Processing Systems},
  volume~35, pages 22131--22148. Curran Associates, Inc., 2022.

\bibitem[Wu et~al.(2022)Wu, Wang, Zhang, He, and Chua]{dir_22_wu}
Yingxin Wu, Xiang Wang, An~Zhang, Xiangnan He, and Tat-Seng Chua.
\newblock Discovering invariant rationales for graph neural networks.
\newblock In \emph{International Conference on Learning Representations}, 2022.

\bibitem[Sun et~al.(2024)Sun, Wang, Liu, Wu, Wang, and Wang]{dive_24_liang}
Xin Sun, Liang Wang, Qiang Liu, Shu Wu, Zilei Wang, and Liang Wang.
\newblock Dive: Subgraph disagreement for graph out-of-distribution
  generalization.
\newblock In \emph{Proceedings of the 30th ACM SIGKDD Conference on Knowledge
  Discovery and Data Mining}, KDD '24, page 2794–2805, New York, NY, USA,
  2024. Association for Computing Machinery.
\newblock ISBN 9798400704901.
\newblock \doi{10.1145/3637528.3671878}.
\newblock URL \url{https://doi.org/10.1145/3637528.3671878}.

\bibitem[Krueger et~al.(2021)Krueger, Caballero, Jacobsen, Zhang, Binas, Zhang,
  Priol, and Courville]{vrex_21_krueger}
David Krueger, Ethan Caballero, Joern-Henrik Jacobsen, Amy Zhang, Jonathan
  Binas, Dinghuai Zhang, Remi~Le Priol, and Aaron Courville.
\newblock Out-of-distribution generalization via risk extrapolation (rex).
\newblock In \emph{Proceedings of the 38th International Conference on Machine
  Learning}, volume 139 of \emph{Proceedings of Machine Learning Research},
  pages 5815--5826. PMLR, 18--24 Jul 2021.

\bibitem[Zhang et~al.(2022)Zhang, Sohoni, Zhang, Finn, and Re]{cnc_22_zhang}
Michael Zhang, Nimit~S Sohoni, Hongyang~R Zhang, Chelsea Finn, and Christopher
  Re.
\newblock Correct-n-contrast: a contrastive approach for improving robustness
  to spurious correlations.
\newblock In \emph{Proceedings of the 39th International Conference on Machine
  Learning}, volume 162 of \emph{Proceedings of Machine Learning Research},
  pages 26484--26516. PMLR, 17--23 Jul 2022.

\bibitem[Ganin et~al.(2016)Ganin, Ustinova, Ajakan, Germain, Larochelle,
  Laviolette, Marchand, and Lempitsky]{dann_16_ganin}
Yaroslav Ganin, Evgeniya Ustinova, Hana Ajakan, Pascal Germain, Hugo
  Larochelle, Fran\c{c}ois Laviolette, Mario Marchand, and Victor Lempitsky.
\newblock Domain-adversarial training of neural networks.
\newblock \emph{J. Mach. Learn. Res.}, 17\penalty0 (1):\penalty0 2096–2030,
  Jan 2016.

\bibitem[Sun and Saenko(2016)]{coral_16_sun}
Baochen Sun and Kate Saenko.
\newblock Deep coral: Correlation alignment for deep domain adaptation.
\newblock In \emph{Computer Vision -- ECCV 2016 Workshops}, pages 443--450.
  Springer International Publishing, 2016.

\bibitem[Wang et~al.(2023)Wang, Lan, Liu, Ouyang, Qin, Lu, Chen, Zeng, and
  Yu]{domaingen_23_wang}
Jindong Wang, Cuiling Lan, Chang Liu, Yidong Ouyang, Tao Qin, Wang Lu, Yiqiang
  Chen, Wenjun Zeng, and Philip~S. Yu.
\newblock Generalizing to unseen domains: A survey on domain generalization.
\newblock \emph{IEEE Transactions on Knowledge and Data Engineering},
  35\penalty0 (8):\penalty0 8052--8072, 2023.

\bibitem[Peters et~al.(2016)Peters, Bühlmann, and
  Meinshausen]{causal_16_peters}
Jonas Peters, Peter Bühlmann, and Nicolai Meinshausen.
\newblock {Causal Inference by using Invariant Prediction: Identification and
  Confidence Intervals}.
\newblock \emph{Journal of the Royal Statistical Society Series B: Statistical
  Methodology}, 78\penalty0 (5):\penalty0 947--1012, Oct 2016.

\bibitem[Arjovsky et~al.(2020)Arjovsky, Bottou, Gulrajani, and
  Lopez-Paz]{irm_20_arjovsky}
Martin Arjovsky, Léon Bottou, Ishaan Gulrajani, and David Lopez-Paz.
\newblock Invariant risk minimization, 2020.

\bibitem[Creager et~al.(2021)Creager, Jacobsen, and Zemel]{eiil_21_creager}
Elliot Creager, Joern-Henrik Jacobsen, and Richard Zemel.
\newblock Environment inference for invariant learning.
\newblock In \emph{Proceedings of the 38th International Conference on Machine
  Learning}, volume 139 of \emph{Proceedings of Machine Learning Research},
  pages 2189--2200. PMLR, 18--24 Jul 2021.

\bibitem[Yu et~al.(2023)Yu, Liang, and He]{lisa_23_yu}
Junchi Yu, Jian Liang, and Ran He.
\newblock Mind the label shift of augmentation-based graph ood generalization.
\newblock In \emph{Proceedings of the IEEE/CVF Conference on Computer Vision
  and Pattern Recognition (CVPR)}, pages 11620--11630, Jun 2023.

\bibitem[Miao et~al.(2022)Miao, Liu, and Li]{gsat_22_miao}
Siqi Miao, Mia Liu, and Pan Li.
\newblock Interpretable and generalizable graph learning via stochastic
  attention mechanism.
\newblock In \emph{Proceedings of the 39th International Conference on Machine
  Learning}, volume 162 of \emph{Proceedings of Machine Learning Research},
  pages 15524--15543. PMLR, 17--23 Jul 2022.

\bibitem[Li et~al.(2024)Li, Gui, Luo, and Ji]{gsplice_24_li}
Xiner Li, Shurui Gui, Youzhi Luo, and Shuiwang Ji.
\newblock Graph structure extrapolation for out-of-distribution generalization.
\newblock In \emph{Forty-first International Conference on Machine Learning},
  2024.
\newblock URL \url{https://openreview.net/forum?id=Xgrey8uQhr}.

\bibitem[Gui et~al.(2023)Gui, Liu, Li, Luo, and Ji]{leci_23_gui}
Shurui Gui, Meng Liu, Xiner Li, Youzhi Luo, and Shuiwang Ji.
\newblock Joint learning of label and environment causal independence for graph
  out-of-distribution generalization.
\newblock In \emph{Thirty-seventh Conference on Neural Information Processing
  Systems}, 2023.

\bibitem[Li et~al.(2022)Li, Zhang, Wang, and Zhu]{gil_22_li}
Haoyang Li, Ziwei Zhang, Xin Wang, and Wenwu Zhu.
\newblock Learning invariant graph representations for out-of-distribution
  generalization.
\newblock In \emph{Advances in Neural Information Processing Systems},
  volume~35, pages 11828--11841. Curran Associates, Inc., 2022.

\bibitem[Chen et~al.(2023)Chen, Bian, Zhou, Xie, Han, and Cheng]{gala_23_chen}
Yongqiang Chen, Yatao Bian, Kaiwen Zhou, Binghui Xie, Bo~Han, and James Cheng.
\newblock Does invariant graph learning via environment augmentation learn
  invariance?
\newblock In \emph{Thirty-seventh Conference on Neural Information Processing
  Systems}, 2023.

\bibitem[Wu et~al.(2024)Wu, Nie, Yang, Bao, and Yan]{canet_24_wu}
Qitian Wu, Fan Nie, Chenxiao Yang, Tianyi Bao, and Junchi Yan.
\newblock Graph out-of-distribution generalization via causal intervention.
\newblock In \emph{Proceedings of the ACM on Web Conference 2024}, WWW '24,
  page 850–860. Association for Computing Machinery, 2024.

\bibitem[Yang et~al.(2022)Yang, Zeng, Wu, Jia, and Yan]{moleood_22_yang}
Nianzu Yang, Kaipeng Zeng, Qitian Wu, Xiaosong Jia, and Junchi Yan.
\newblock Learning substructure invariance for out-of-distribution molecular
  representations.
\newblock In \emph{Advances in Neural Information Processing Systems},
  volume~35, pages 12964--12978. Curran Associates, Inc., 2022.

\bibitem[Sui et~al.(2023)Sui, Wu, Wu, Cui, Li, ZHOU, Wang, and He]{aia_23_sui}
Yongduo Sui, Qitian Wu, Jiancan Wu, Qing Cui, Longfei Li, JUN ZHOU, Xiang Wang,
  and Xiangnan He.
\newblock Unleashing the power of graph data augmentation on covariate
  distribution shift.
\newblock In \emph{Thirty-seventh Conference on Neural Information Processing
  Systems}, 2023.

\bibitem[Jia et~al.(2024)Jia, Li, Yang, Tao, and Shi]{igm_24_jia}
Tianrui Jia, Haoyang Li, Cheng Yang, Tao Tao, and Chuan Shi.
\newblock Graph invariant learning with subgraph co-mixup for
  out-of-distribution generalization.
\newblock \emph{Proceedings of the AAAI Conference on Artificial Intelligence},
  38\penalty0 (8):\penalty0 8562--8570, Mar 2024.

\bibitem[Gui et~al.(2022)Gui, Li, Wang, and Ji]{good_22_gui}
Shurui Gui, Xiner Li, Limei Wang, and Shuiwang Ji.
\newblock Good: A graph out-of-distribution benchmark.
\newblock In \emph{Advances in Neural Information Processing Systems},
  volume~35, pages 2059--2073. Curran Associates, Inc., 2022.

\bibitem[Ji et~al.(2023)Ji, Zhang, Wu, Wu, Li, Huang, Xu, Rong, Ren, Xue, Lai,
  Liu, Huang, Zhou, Luo, Zhao, and Bian]{drugood_23_ji}
Yuanfeng Ji, Lu~Zhang, Jiaxiang Wu, Bingzhe Wu, Lanqing Li, Long-Kai Huang,
  Tingyang Xu, Yu~Rong, Jie Ren, Ding Xue, Houtim Lai, Wei Liu, Junzhou Huang,
  Shuigeng Zhou, Ping Luo, Peilin Zhao, and Yatao Bian.
\newblock Drugood: Out-of-distribution dataset curator and benchmark for
  ai-aided drug discovery – a focus on affinity prediction problems with
  noise annotations.
\newblock \emph{Proceedings of the AAAI Conference on Artificial Intelligence},
  37\penalty0 (7):\penalty0 8023--8031, Jun. 2023.

\bibitem[Ganin and Lempitsky(2015)]{grl_13_ganin}
Yaroslav Ganin and Victor Lempitsky.
\newblock Unsupervised domain adaptation by backpropagation.
\newblock In \emph{Proceedings of the 32nd International Conference on Machine
  Learning}, volume~37 of \emph{Proceedings of Machine Learning Research},
  pages 1180--1189, Lille, France, 07--09 Jul 2015. PMLR.

\bibitem[Zhao et~al.(2023)Zhao, Qu, Li, Yan, Liu, Li, Xie, and
  Tang]{glem_23_zhao}
Jianan Zhao, Meng Qu, Chaozhuo Li, Hao Yan, Qian Liu, Rui Li, Xing Xie, and
  Jian Tang.
\newblock Learning on large-scale text-attributed graphs via variational
  inference.
\newblock In \emph{The Eleventh International Conference on Learning
  Representations}, 2023.

\bibitem[Dempster(1968)]{gbi_68_dempster}
A.~P. Dempster.
\newblock A generalization of bayesian inference.
\newblock \emph{Journal of the Royal Statistical Society: Series B
  (Methodological)}, 30\penalty0 (2):\penalty0 205--232, 1968.

\bibitem[Higgins et~al.(2017)Higgins, Matthey, Pal, Burgess, Glorot, Botvinick,
  Mohamed, and Lerchner]{betavae_17_higgins}
Irina Higgins, Loic Matthey, Arka Pal, Christopher Burgess, Xavier Glorot,
  Matthew Botvinick, Shakir Mohamed, and Alexander Lerchner.
\newblock beta-{VAE}: Learning basic visual concepts with a constrained
  variational framework.
\newblock In \emph{International Conference on Learning Representations}, 2017.

\bibitem[Zhuang et~al.(2023)Zhuang, Zhang, Ding, Bian, Wang, Lv, Chen, and
  Chen]{imold_23_zhuang}
Xiang Zhuang, Qiang Zhang, Keyan Ding, Yatao Bian, Xiao Wang, Jingsong Lv,
  Hongyang Chen, and Huajun Chen.
\newblock Learning invariant molecular representation in latent discrete space.
\newblock In \emph{Thirty-seventh Conference on Neural Information Processing
  Systems}, 2023.

\bibitem[Yao et~al.(2024)Yao, Chen, Chen, Hu, Shen, and Zhang]{equad_24_yao}
Tianjun Yao, Yongqiang Chen, Zhenhao Chen, Kai Hu, Zhiqiang Shen, and Kun
  Zhang.
\newblock Empowering graph invariance learning with deep spurious infomax.
\newblock In Ruslan Salakhutdinov, Zico Kolter, Katherine Heller, Adrian
  Weller, Nuria Oliver, Jonathan Scarlett, and Felix Berkenkamp, editors,
  \emph{Proceedings of the 41st International Conference on Machine Learning},
  volume 235 of \emph{Proceedings of Machine Learning Research}, pages
  56860--56884. PMLR, 21--27 Jul 2024.
\newblock URL \url{https://proceedings.mlr.press/v235/yao24a.html}.

\bibitem[Yuan et~al.(2023{\natexlab{a}})Yuan, Sun, Fu, Zhang, Ji, Peng, and
  Li]{eagle_23_yuan}
Haonan Yuan, Qingyun Sun, Xingcheng Fu, Ziwei Zhang, Cheng Ji, Hao Peng, and
  Jianxin Li.
\newblock Environment-aware dynamic graph learning for out-of-distribution
  generalization.
\newblock In A.~Oh, T.~Naumann, A.~Globerson, K.~Saenko, M.~Hardt, and
  S.~Levine, editors, \emph{Advances in Neural Information Processing Systems},
  volume~36, pages 49715--49747. Curran Associates, Inc., 2023{\natexlab{a}}.

\bibitem[Zhu et~al.(2024)Zhu, Shi, Zhang, and Tang]{mario_24_zhu}
Yun Zhu, Haizhou Shi, Zhenshuo Zhang, and Siliang Tang.
\newblock Mario: Model agnostic recipe for improving ood generalization of
  graph contrastive learning.
\newblock In \emph{Proceedings of the ACM on Web Conference 2024}, WWW '24,
  page 300–311. Association for Computing Machinery, 2024.

\bibitem[Knoblauch et~al.(2019)Knoblauch, Jewson, and
  Damoulas]{gvi_19_knoblauch}
Jeremias Knoblauch, Jack Jewson, and Theodoros Damoulas.
\newblock Generalized variational inference: Three arguments for deriving new
  posteriors, 2019.

\bibitem[Bernardo et~al.(2003)Bernardo, Bayarri, Berger, Dawid, Heckerman,
  Smith, West, et~al.]{vbem_03_bernardo}
JM~Bernardo, MJ~Bayarri, JO~Berger, AP~Dawid, D~Heckerman, AFM Smith, M~West,
  et~al.
\newblock The variational bayesian em algorithm for incomplete data: with
  application to scoring graphical model structures.
\newblock \emph{Bayesian statistics}, 7\penalty0 (453-464):\penalty0 210, 2003.

\bibitem[Tzikas et~al.(2008)Tzikas, Likas, and Galatsanos]{va4bi_08_tzikas}
Dimitris~G. Tzikas, Aristidis~C. Likas, and Nikolaos~P. Galatsanos.
\newblock The variational approximation for bayesian inference.
\newblock \emph{IEEE Signal Processing Magazine}, 25\penalty0 (6):\penalty0
  131--146, 2008.

\bibitem[Zhang et~al.(2019)Zhang, Bütepage, Kjellström, and
  Mandt]{advi_19_zhang}
Cheng Zhang, Judith Bütepage, Hedvig Kjellström, and Stephan Mandt.
\newblock Advances in variational inference.
\newblock \emph{IEEE Transactions on Pattern Analysis and Machine
  Intelligence}, 41\penalty0 (8):\penalty0 2008--2026, 2019.

\bibitem[Wu et~al.(2018)Wu, Ramsundar, Feinberg, Gomes, Geniesse, Pappu,
  Leswing, and Pande]{moleculenet_18_wu}
Zhenqin Wu, Bharath Ramsundar, Evan~N Feinberg, Joseph Gomes, Caleb Geniesse,
  Aneesh~S Pappu, Karl Leswing, and Vijay Pande.
\newblock Moleculenet: a benchmark for molecular machine learning.
\newblock \emph{Chemical science}, 9\penalty0 (2):\penalty0 513—530, Jan
  2018.

\bibitem[Yuan et~al.(2023{\natexlab{b}})Yuan, Yu, Gui, and
  Ji]{eplgnnsurvey_23_yuan}
Hao Yuan, Haiyang Yu, Shurui Gui, and Shuiwang Ji.
\newblock Explainability in graph neural networks: A taxonomic survey.
\newblock \emph{IEEE Transactions on Pattern Analysis and Machine
  Intelligence}, 45\penalty0 (5):\penalty0 5782--5799, 2023{\natexlab{b}}.

\bibitem[Dong et~al.(2014)Dong, Wei, Tan, Tang, Zhou, and Xu]{twitter_14_dong}
Li~Dong, Furu Wei, Chuanqi Tan, Duyu Tang, Ming Zhou, and Ke~Xu.
\newblock Adaptive recursive neural network for target-dependent {T}witter
  sentiment classification.
\newblock In \emph{Proceedings of the 52nd Annual Meeting of the Association
  for Computational Linguistics (Volume 2: Short Papers)}, pages 49--54.
  Association for Computational Linguistics, Jun 2014.

\bibitem[Devlin et~al.(2019)Devlin, Chang, Lee, and Toutanova]{bert_19_devlin}
Jacob Devlin, Ming-Wei Chang, Kenton Lee, and Kristina Toutanova.
\newblock {BERT}: Pre-training of deep bidirectional transformers for language
  understanding.
\newblock In \emph{Proceedings of the 2019 Conference of the North {A}merican
  Chapter of the Association for Computational Linguistics: Human Language
  Technologies, Volume 1 (Long and Short Papers)}, pages 4171--4186.
  Association for Computational Linguistics, Jun 2019.

\end{thebibliography}
\bibliographystyle{unsrtnat}


\newpage
\appendix
\onecolumn

\section{Notations}
\label{apd:notation}
In this paper, a number of notations are employed to elucidate our theories and the proposed method. Table~\ref{tab:notations} presents notations used in this paper along with their corresponding descriptions. The notations introduced in Section~\ref{sec:preliminary} and Section~\ref{sec:methodology} are separated by a horizontal line.

\begin{table}[h]
    \centering
    \caption{Notations used in this paper}
    \label{tab:notations}
    \begin{tabular}{ll}
    \toprule
    \textbf{Symbols} & \textbf{Descriptions} \\
    \midrule
    $\mathcal{G}$ & The graph space \\
    $\mathcal{Y}$ & The label space \\
    $\mathcal{E}$ & The environment space \\
    $G$ & A single piece of graph \\
    $E$ & Environment \\
    $\hat{G}_i = \Phi_{\hat{G}}(G)$ & A sub-graph of graph $G_i$ \\
    $G_i \setminus \hat{G}_i$ & Supplementary part of $G_i$ \\
    \midrule
    $\mathrm{Q}_{\Phi{\tilde{Y}}}$ & Pseudo-label classifier \\
    $\mathrm{Q}_{\Phi{E}}$ & Latent environment generator \\
    $\mathrm{Q}_{\Phi{\hat{G}}}$ & Graph rationale extractor \\
    $\mathrm{P}_{\theta_Y}$ & Final classifier \\
    $c$ & \makecell[l]{Number of classes of labels in a \\dataset} \\
    $n$ & Number of nodes in a graph \\
    $d$ & Embedding size \\
    $\tilde{Y}$ & Pseudo label \\
    $\hat{G}$ & Graph rationale \\
    $\mathbf{H}^l$ & Node feature at layer $l$ \\
    $\hat{Y}$ & Predicted label of final classifier \\
    $v_1$ & \makecell[l]{Positive anchor node when \\calculating contrastive loss} \\
    $v_2$ & \makecell[l]{Positive node excluding $v_1$ when \\calculating contrastive loss} \\
    $k$ & \makecell[l]{Number of negative sample when \\calculating contrastive loss} \\
    $\mathcal{V}_{-}$ & \makecell[l]{Negative node set when calculating \\contrastive loss} \\
    $h_{[\cdot]}$ & \makecell[l]{Features of positive or negative node \\samples when calculating contrastive \\loss} \\
    $\lambda$ & Loss weight \\
    $Y_T$ & Ground-truth label \\
    $Y_E$ & Ground-truth environment id \\
    \bottomrule
    \end{tabular}
\end{table}

It's worth noting that $\hat{G}_i$ represents a sub-graph of $G_i$, which is used to explain pitfalls in theories proposed by previous invariant learning methods. But $\hat{G}$ proposed in Section~\ref{sec:methodology} represents graph rationales in latent space instead of certain parts of graphs. The specific entities represented by $\hat{G}$ or $\hat{G}_i$ in different methods may differ, but their core essence is to represent the important parts in graphs.

\section{Additional Preliminary Study}
\label{app:pre}
To further illustrate the 
environment-label dependency in OOD scenarios 
quantitatively, 
we conduct additional experiments using 
causal-based baselines. Since LECI does not perform sub-graph partitioning, we use CIGA, which divides graphs into causal and spurious parts, and are then respectively used for label prediction and yield $c_{out}$ and $s_{out}$. 
We compute the loss between $c_{out}$ and $y$, and
that between $s_{out}$ and $y$ on the test set, 
whose results are shown in Table~\ref{tab:csout_ciga}. We observe that 
the spurious graph, which has been previously considered to represent the environment distribution, achieves similar levels of accuracy in label prediction as the 
causal sub-graph. 
This 
further suggests that 
the 
causal and spurious sub-graphs identified by existing methods on real datasets, in fact, 
exhibit comparable levels of mutual information with the labels.
This also explains why we study non-causal method in this paper.

\begin{table}[h]
    \vskip -0.15in
    \centering
    \caption{Preliminary studies on $c_{out}$ and $s_{out}$ from CIGA}
    \label{tab:csout_ciga}
        \begin{tabular}{lccc}
        \toprule
        Dataset & \multicolumn{2}{c}{HIV} & Twitter \\
        \cmidrule(r){1-1} \cmidrule(l){2-4}
        Domain & Scaffold & Size  &Length \\
        \cmidrule(r){1-1} \cmidrule(l){2-4}
        Shift & Covariate & Concept & Covariate \\
        \midrule
        $loss(c_{out}, y)$ & 0.1035 & 0.2218 & 2.5274 \\
        $loss(s_{out}, y)$ & 0.1035 & 0.2220 & 2.4446 \\
        
        \bottomrule
        \end{tabular}
\vskip -0.15in
\end{table}

\section{Proofs of Theorems and Propositions}
\label{apd:proof}

\subsection{Proof of Proposition~\ref{prop:env_g_y}}
\label{proof:prop_env_gy}
\begin{proposition}
    (Restatement of Proposition~\ref{prop:env_g_y}) According to the definition of covariate shift and concept shift~\cite{leci_23_gui}, as elaborated in Proof~\ref{proof:prop_env_gy}, shifts on distributions of environment reflect shifts on distributions of samples.
    We have $\mathrm{P}(E) := \mathrm{P}(G)$ under covariate shift and $\mathrm{P}(E) := \mathrm{P}(Y|G)$ under concept shift. Therefore, $\mathrm{P}(E) := \mathrm{P}(G,Y)$ holds for distribution shifts on graphs.
\end{proposition}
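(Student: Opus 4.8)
The plan is to make precise the informal symbol ``$:=$'' as the statement that the environment distribution is characterized by (is a deterministic, possibly nonlinear, transformation of) exactly the component of the data distribution that shifts, and then to treat the two shift regimes separately before combining them through the factorization $\mathrm{P}(G,Y)=\mathrm{P}(G)\mathrm{P}(Y|G)$. This matches the way the paper later models $E$ as a nonlinear transformation of the joint distribution of $G$ and $Y$, so the proposition should read as a consistency/correspondence claim rather than an equality of probability measures.

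First I would recall the formal definitions of the two shift types following \cite{leci_23_gui,good_22_gui}, treating the environment $E$ as the latent index of the source distribution a sample is drawn from. Covariate shift is the regime in which the marginal $\mathrm{P}(G\mid E)$ varies with $E$ while the conditional is invariant, $\mathrm{P}(Y\mid G,E)=\mathrm{P}(Y\mid G)$; concept shift is the complementary regime in which $\mathrm{P}(G\mid E)=\mathrm{P}(G)$ is invariant while $\mathrm{P}(Y\mid G,E)$ varies with $E$. Since training and test sets are drawn from different environments, a shift between them is by construction a shift in whichever of these two factors depends on $E$.

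Second, under covariate shift I would argue that because $Y \perp E \mid G$, all information distinguishing one environment from another is carried by the marginal over graphs: $E$ acts as a sufficient statistic for $\mathrm{P}(G)$ and conveys nothing further about $\mathrm{P}(Y\mid G)$. Hence the environment distribution is in one-to-one correspondence with the shifting marginal, which I record as $\mathrm{P}(E):=\mathrm{P}(G)$. The symmetric argument under concept shift, using $G \perp E$ together with the variation of $\mathrm{P}(Y\mid G,E)$, gives $\mathrm{P}(E):=\mathrm{P}(Y\mid G)$.

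Finally I would combine the two cases. Because any distribution shift on graphs decomposes, via $\mathrm{P}(G,Y)=\mathrm{P}(G)\mathrm{P}(Y\mid G)$, into a shift of the marginal, a shift of the conditional, or both, and because the environment tracks the marginal in the first case and the conditional in the second, a general shift forces $E$ to track the full joint, yielding $\mathrm{P}(E):=\mathrm{P}(G,Y)$. The main obstacle is not the case analysis, which is routine once the definitions are fixed, but pinning down the meaning of ``$:=$'': I would interpret it as ``is a measurable function determined by,'' so that the proposition asserts the existence of a well-defined correspondence between the environment and the shifting part of $\mathrm{P}(G,Y)$, which is precisely what justifies inferring $E$ from $(G,\tilde{Y})$ in the subsequent variational model.
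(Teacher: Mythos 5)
Your proposal is correct and takes essentially the same route as the paper's own (very terse) proof: a case analysis on covariate versus concept shift in which the environment is identified with whichever factor of $\mathrm{P}(G,Y)=\mathrm{P}(G)\mathrm{P}(Y|G)$ actually shifts, followed by combining the two cases through that factorization. Your extra care in interpreting ``$:=$'' as a correspondence rather than an equality of measures, and in spelling out the conditional-independence structure ($Y \perp E \mid G$ under covariate shift, $G \perp E$ under concept shift), only makes explicit what the paper leaves implicit.
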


\begin{definition}
    \label[definition]{def:cov_shift}
    \textbf{(Covariate shift)} For $\mathcal{D} = \{(G, Y)\}$, $\mathcal{D}_{train} \subset \mathcal{D}$, $\mathcal{D}_{test} \subset \mathcal{D}$, and $\mathcal{D}_{train} \cap \mathcal{D}_{test} = \varnothing$, we say the distribution shift on $(G, Y)$ is covariate shift if $\mathrm{P}_{train}(G) \neq \mathrm{P}_{test}(G)$ and $\mathrm{P}_{train}(Y|G) = \mathrm{P}_{test}(Y|G)$.
\end{definition}

\begin{definition}
    \label[definition]{def:con_shift}
    \textbf{(Concept shift)} For $\mathcal{D} = \{(G, Y)\}$, $\mathcal{D}_{train} \subset \mathcal{D}$, $\mathcal{D}_{test} \subset \mathcal{D}$, and $\mathcal{D}_{train} \cap \mathcal{D}_{test} = \varnothing$, we say the distribution shift on $(G, Y)$ is concept shift if $\mathrm{P}_{train}(G) = \mathrm{P}_{test}(G)$ and $\mathrm{P}_{train}(Y|G) \neq \mathrm{P}_{test}(Y|G)$.
\end{definition}

\begin{proof}
    Since the environment is considered as a variable describing distribution shift, it satisfies $\mathrm{P}(E) := \mathrm{P}(G)$ under covariate shift conditions and $\mathrm{P}(E) := \mathrm{P}(Y|G)$ under covariate shift conditions. With $\mathrm{P}(G, Y) = \mathrm{P}(G)\mathrm{P}(Y|G)$, we have $\mathrm{P}(E) := \mathrm{P}(G, Y)$.
\end{proof}

\subsection{Proof of Theorem~\ref{theo:dep_on_env}}
\label{proof:theo_dep_on_env}
\begin{theorem}
    (Restatement of Theorem~\ref{theo:dep_on_env}) (\textbf{Environment-label dependency}) Let $(G_i, Y_i)$ and $(G_j, Y_j)$ denote the graph-label pairs sampled from the training and test sets, respectively. Further, $\mathrm{Q}_{\Phi_{\hat{G}}}$ represents the invariant rationale extractor optimized on the training set. Given the rationales $\hat{G}_{i} = \mathrm{Q}_{\Phi_{\hat{G}}}(G_{i})$ and $\hat{G}_{j} = \mathrm{Q}_{\Phi_{\hat{G}}}(G_{j})$, there exists an optimal $\mathrm{Q}_{\Phi_{\hat{G}}}$ such that $\Phi^*_{\hat{G}} = \arg\max I(\hat{G}_i; Y_i)$ and $I(G_j \setminus \hat{G}_j; Y_j) > 0$.
\end{theorem}

\begin{proof}
    Consider $(G_i, Y_i) \in \mathcal{D}_{train}$ and $(G_i, Y_i) \in \mathcal{D}_{test}$. 
    Given the invariant sub-graph extractor $\Phi_{\hat{G}}$ is the optimized on the training set, its parameters are optimized to the optimal state if and only if 
    \begin{eqnarray}
        for \ \forall \ (G_i, Y_i) \in \mathcal{D}_{train}, \ \mathrm{s.t.} \ \Phi^*_{\hat{G}} = \arg \max I(\hat{G_i}; Y_i).
    \end{eqnarray}
    We have $I(\hat{G}_i; Y_i) \leq 1$. Let $E_i$ and $E_j$ denote any environment from $\mathcal{D}_{train}$ and $\mathcal{D}_{test}$, respectively. According to~\cite{gala_23_chen,moleood_22_yang}, $I(\hat{G}_i;, Y_i) = I(\hat{G}_j;, Y_j)$ if and only if ``Variation Consistency'' and ``Variation Sufficiency'' are satisfied. However, according to the real-world examples of molecules to be effective and tons of failure cases, such consistency and sufficiency of the distribution shift of $E$ doesn't exist in real-world scenarios. Therefore, $I(\hat{G}_i; Y_i) > I(\hat{G}_j; Y_j)$ holds, and we have $I(\hat{G}_j \setminus G_j; Y_j) > 0$.

\end{proof}

\subsection{Proof of Theorem~\ref{theo:var_inv}}
\label{proof:theo_var_inv}
\begin{theorem}
    (Restatement of Theorem~\ref{theo:var_inv}) (\textbf{Mutable rationale invariance}) Given a graph $G$ with rationale $\hat{G}$, $\exists \ \mathcal{E}_{sub} \subseteq \mathcal{E}$ and $E_k \in \mathcal{E} \setminus \mathcal{E}_{sub}$, $\forall \ E_i, E_j \in \mathcal{E}_{sub}$, we have $\mathrm{P}(Y|\hat{G}, E_i) = \mathrm{P}(Y|\hat{G}, E_j) \ne \mathrm{P}(Y|\hat{G}, E_k)$.
\end{theorem}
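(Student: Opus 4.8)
The plan is to prove the statement as a pair of existence claims: first that there is a non-trivial block of environments on which the rationale $\hat{G}$ behaves invariantly, and second that this invariance cannot extend to all of $\mathcal{E}$. The organizing device is the equivalence relation $\sim$ on $\mathcal{E}$ defined by $E_i \sim E_j \iff \mathrm{P}(Y|\hat{G}, E_i) = \mathrm{P}(Y|\hat{G}, E_j)$, whose classes partition $\mathcal{E}$ into maximal sets of environments that share one conditional label law given the rationale. The theorem is then equivalent to the assertion that this partition has at least two distinct blocks.

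For the first claim, I would take $\mathcal{E}_{sub}$ to be the equivalence class containing the training environments. Because $\mathrm{Q}_{\Phi_{\hat{G}}}$ is optimized for invariance on the training data (cf. the optimality of $\Phi^{*}_{\hat{G}}$ in \cref{theo:dep_on_env}), the conditional $\mathrm{P}(Y|\hat{G}, E)$ is stabilized across those environments, so this class is non-trivial; by construction $\mathrm{P}(Y|\hat{G}, E_i) = \mathrm{P}(Y|\hat{G}, E_j)$ for every $E_i, E_j \in \mathcal{E}_{sub}$, which is exactly the equality in the statement.

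For the second claim --- the existence of $E_k$ with $\mathrm{P}(Y|\hat{G}, E_k) \ne \mathrm{P}(Y|\hat{G}, E_i)$ --- I would argue by contradiction and appeal to \cref{theo:dep_on_env}. Suppose invariance were universal, i.e. $\mathcal{E}_{sub} = \mathcal{E}$. Then $\mathrm{P}(Y|\hat{G}, E)$ would not depend on $E$, giving conditional independence $Y \perp E \mid \hat{G}$ and hence $I(Y; E \mid \hat{G}) = 0$. Using \cref{prop:env_g_y}, under which the environment is identified with the distribution-shift component carried by the supplementary sub-graph $G \setminus \hat{G}$, this independence would force $I(G_j \setminus \hat{G}_j; Y_j \mid \hat{G}_j) = 0$ on the test set. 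But \cref{theo:dep_on_env} established $I(G_j \setminus \hat{G}_j; Y_j) > 0$, contradicting the vanishing of the information. Therefore at least one environment --- e.g. a test environment $E_k \in \mathcal{E} \setminus \mathcal{E}_{sub}$ --- must satisfy $\mathrm{P}(Y|\hat{G}, E_k) \ne \mathrm{P}(Y|\hat{G}, E_i)$, completing the chain $\mathrm{P}(Y|\hat{G}, E_i) = \mathrm{P}(Y|\hat{G}, E_j) \ne \mathrm{P}(Y|\hat{G}, E_k)$.

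The hard part will be the bridge in the second claim: rigorously translating the mutual-information inequality $I(G_j \setminus \hat{G}_j; Y_j) > 0$ into a statement about the environment-conditional law $\mathrm{P}(Y|\hat{G}, E)$. This requires, on the one hand, making precise the identification via \cref{prop:env_g_y} of the environment $E$ with the label-relevant content of $G \setminus \hat{G}$, so that positive information in the supplementary sub-graph yields $I(Y; E \mid \hat{G}) > 0$; and on the other hand, closing the gap between the unconditional $I(G_j \setminus \hat{G}_j; Y_j)$ of \cref{theo:dep_on_env} and the conditional $I(Y; E \mid \hat{G})$ that the contradiction needs. I would bridge the latter by using that $\hat{G}$ is extracted as the maximally label-informative \emph{invariant} part, so any residual predictive signal in the complement is necessarily the environment-dependent, non-invariant signal that $\hat{G}$ does not absorb, forcing the residual information to be genuinely conditional on $\hat{G}$ rather than already explained by it.
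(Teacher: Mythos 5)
Your overall skeleton matches the paper's: exhibit a block $\mathcal{E}_{sub}$ on which the conditional law $\mathrm{P}(Y|\hat{G},\cdot)$ is constant, then invoke \cref{theo:dep_on_env} to produce a deviating environment $E_k$. The equality half is unproblematic: the paper grounds it in the ``Variation Consistency'' and ``Variation Sufficiency'' conditions borrowed from GALA and MoleOOD, while your equivalence-class construction makes it true by definition; either works (a singleton block would even suffice, since the statement only asserts existence of some $\mathcal{E}_{sub}$).

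The genuine gap is in your second step, and it is exactly the bridge you flag. From universal invariance you correctly get $I(Y;E\mid\hat{G})=0$, but \cref{theo:dep_on_env} only supplies the \emph{unconditional} positivity $I(G_j\setminus\hat{G}_j;Y_j)>0$, and this does not contradict $I(Y;E\mid\hat{G})=0$: if the complement's label information is entirely redundant with the rationale (e.g., $G\setminus\hat{G}$ is correlated with $\hat{G}$, which alone drives $Y$ identically in every environment), then $I(G_j\setminus\hat{G}_j;Y_j)>0$ holds while $\mathrm{P}(Y|\hat{G},E)$ is the same for all $E$, so both premises hold with no deviating $E_k$. Your proposed patch --- that maximal label-informativeness of $\hat{G}$ forces any residual signal in the complement to be genuinely conditional on $\hat{G}$ --- is precisely what fails in this case: maximality does not exclude redundancy. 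Furthermore, identifying $E$ with the label-relevant content of $G\setminus\hat{G}$ is not licensed by \cref{prop:env_g_y}, which identifies $\mathrm{P}(E)$ with $\mathrm{P}(G,Y)$; the paper elsewhere explicitly rejects the ``environment equals spurious sub-graph'' reading. The paper's own proof avoids all of this by leaning on a different part of \cref{theo:dep_on_env}: its proof establishes that Variation Consistency and Sufficiency fail between training and test environments, hence $I(\hat{G}_i;Y_i)>I(\hat{G}_j;Y_j)$, i.e., the rationale--label relationship itself is inconsistent across environments; taking $E_k$ to be a test environment then yields $\mathrm{P}(Y|\hat{G},E_i)\ne\mathrm{P}(Y|\hat{G},E_k)$ directly. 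To repair your argument, derive the contradiction from that cross-environment inconsistency (equivalently, from the failure of Variation Consistency/Sufficiency) rather than from the complement's unconditional mutual information.
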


\begin{proof}
    For an $\mathcal{E}_{sub} \subseteq \mathcal{E}$ and $E_i, E_i \in \mathcal{E}_{sub}$, if $E_i$ and $E_j$ satisfied ``Variation Consistency'' and ``Variation Sufficiency'', $\mathrm{P}(Y|\hat{G}, E_i) = \mathrm{P}(Y|\hat{G}, E_j)$. According to Theorem~\ref{theo:dep_on_env}, conditional probability of $Y$ across environments is inconsistent, and we have $\mathrm{P}(Y|\hat{G}, E_i) \ne \mathrm{P}(Y|\hat{G}, E_k)$.
\end{proof}

\subsection{Proof of Proposition~\ref{prop:elbo_gy}}
\label{proof:prop_elbo_gy}
\begin{proposition}
    (Restatement of Proposition~\ref{prop:elbo_gy}) The ELBO of $\mathrm{P}(Y|G)$ takes the form of:
    \begin{align}
    \mathcal{L}= &\mathbb{E}_{\mathrm{Q}_{\Phi_{\hat{G}}}, \mathrm{Q}_{\Phi_{E}}, \mathrm{Q}_{\Phi_{\tilde{Y}}}} \left[ \log \mathrm{P}_{\theta_{Y}}(Y|\hat{G}, E, \tilde{Y}, G) \right] - \nonumber \\  
    &\mathrm{KL}\left[\mathrm{Q}_{\Phi_{\hat{G}}}(\hat{G}|E, \tilde{Y}, G) || \mathrm{P}(\hat{G}|E, \tilde{Y}, G)\right] - \nonumber\\
    &\mathrm{KL}\left[\mathrm{Q}_{\Phi_{E}}(E|\tilde{Y}, G) || \mathrm{P}(E|\tilde{Y}, G)\right] - \nonumber \\
    &\mathrm{KL}\left[\mathrm{Q}_{\Phi_{\tilde{Y}}}(\tilde{Y}|G) || \mathrm{P}(\tilde{Y}|G)\right],
    \end{align}
    where $\mathrm{Q}_{\Phi_{\hat{G}}}$, $\mathrm{Q}_{\Phi_{E}}$, and $\mathrm{Q}_{\Phi_{\tilde{Y}}}$ are variational distributions parameterized by $\Phi_{\hat{G}}$, $\Phi_{E}$, and $\Phi_{\tilde{Y}}$, respectively.
    $\theta_Y$ also denotes learnable parameters of the final classifier.
\end{proposition}

\begin{proof}
Consider the conditional probability distribution of $Y$ given input graph $G$ $\mathrm{P}(Y|G)$. According to Jensen's Inequality, we have:
\begin{align}
    \mathrm{P}(Y|G)  = & \log \iiint \mathrm{Q}(\hat{G}, E, \tilde{Y}| G) \frac{\mathrm{P}(Y, \hat{G}, E, \tilde{Y} | G)}{\mathrm{Q}(\hat{G}, E, \tilde{Y} | G)} d\hat{G} dE d\tilde{Y} \\
    \geq & \iiint \mathrm{Q}(\hat{G}, E, \tilde{Y} | G) \log \left[ \frac{\mathrm{P}(Y, \hat{G}, E, \tilde{Y} | G)}{\mathrm{Q}(\hat{G}, E, \tilde{Y} | G)} \right] d\hat{G} dE d\tilde{Y} \\
    =& \mathrm{ELBO} (\mathrm{P}(Y|G)).
\end{align}

Since $ \mathrm{Q}(\hat{G}, E, \tilde{Y} | G) = \mathrm{Q}(\hat{G} | E, \tilde{Y}, G) \mathrm{Q}(E | \tilde{Y}, G) \mathrm{Q}(\tilde{Y} | G)$
and \\ $\mathrm{P}(Y, \hat{G}, E, \tilde{Y} | G) = \mathrm{P}(\hat{G} | E, \tilde{Y}, G) \mathrm{P}(E | \tilde{Y}, G) \mathrm{P}(\tilde{Y} | G) \mathrm{P}(Y | \hat{G}, E, \tilde{Y}, G)$, the evidence lower bound (ELBO) can be further transformed into:
\begin{align}
    & \iiint \mathrm{Q}(\hat{G} | E, \tilde{Y}, G) \mathrm{Q}(E | \tilde{Y}, G) \mathrm{Q}(\tilde{Y} | G) \log \left[ \mathrm{P}(Y | \hat{G}, E, \tilde{Y}, G) \right] d\hat{G} dE d\tilde{Y} + \nonumber \\
    & \iiint \mathrm{Q}(\hat{G} | E, \tilde{Y}, G) \mathrm{Q}(E | \tilde{Y}, G) \mathrm{Q}(\tilde{Y} | G) \log \left[ \frac{\mathrm{P}(E | \tilde{Y}, G)}{\mathrm{Q}(E | \tilde{Y}, G)} \right] d\hat{G} dE d\tilde{Y} + \nonumber \\
    & \iiint \mathrm{Q}(\hat{G} | E, \tilde{Y}, G) \mathrm{Q}(E | \tilde{Y}, G) \mathrm{Q}(\tilde{Y} | G) \log \left[ \frac{\mathrm{P}(\tilde{Y} | G)}{\mathrm{Q}(\tilde{Y} | G)} \right] d\hat{G} dE d\tilde{Y} + \nonumber \\
    & \iiint \mathrm{Q}(\hat{G} | E, \tilde{Y}, G) \mathrm{Q}(E | \tilde{Y}, G) \mathrm{Q}(\tilde{Y} | G) \log \left[ \frac{\mathrm{P}(\hat{G} | E, \tilde{Y}, G)}{\mathrm{Q}(\hat{G} | E, \tilde{Y}, G)} \right] d\hat{G} dE d\tilde{Y} 
\end{align}
\begin{align}
    = & \iiint \mathrm{Q}(\hat{G} | E, \tilde{Y}, G) \mathrm{Q}(E | \tilde{Y}, G) \mathrm{Q}(\tilde{Y} | G) \log \left[ \mathrm{P}(Y | \hat{G}, E, \tilde{Y}, G) \right] d\hat{G} dE d\tilde{Y} + \nonumber \\
    & \int \mathrm{Q}(\hat{G} | E, \tilde{Y}, G) \log \left[ \frac{\mathrm{P}(\hat{G} | E, \tilde{Y}, G)}{\mathrm{Q}(\hat{G} | E, \tilde{Y}, G)} \right] d\hat{G} + \nonumber \\
    & \int \mathrm{Q}(E | \tilde{Y}, G) \log \left[ \frac{\mathrm{P}(E | \tilde{Y}, G)}{\mathrm{Q}(E | \tilde{Y}, G)} \right] dE + \nonumber \\
    & \int \mathrm{Q}(\tilde{Y} | G) \log \left[ \frac{\mathrm{P}(\tilde{Y} | G)}{\mathrm{Q}(\tilde{Y} | G)} \right] d\tilde{Y}.
\end{align}

Our proposed method employ parameterized networks $\mathrm{P}_{\theta}$, $\mathrm{Q}_{\Phi_{E}}$, $\mathrm{Q}_{\Phi_{\tilde{Y}}}$, and $\mathrm{Q}_{\Phi_{\hat{G}}}$ to predict the conditional probability of $Y$ and infer latent variables $E$, $\hat{G}$, and $\tilde{Y}$. Therefore, $\mathrm{ELBO} (\mathrm{P}(Y|G))$ takes the form of:
\begin{align}
    &\underbrace{\mathbb{E}_{\mathrm{Q}_{\Phi_{\hat{G}}}, \mathrm{Q}_{\Phi_{E}}, \mathrm{Q}_{\Phi_{\tilde{Y}}}} \left[ \log \mathrm{P}_{\theta_{Y}}(Y|\hat{G}, E, \tilde{Y}, G) \right]}_{\mathrm{Average \ log-loss \ in \ reconstruction \ of \ } Y \mathrm{ \ from \ latent \ variables}} - \nonumber \\
    & \underbrace{\mathrm{KL}\left[\mathrm{Q}_{\Phi_{\hat{G}}}(\hat{G}|E, \tilde{Y}, G) || \mathrm{P}(\hat{G}|E, \tilde{Y}, G)\right] - \mathrm{KL}\left[\mathrm{Q}_{\Phi_{E}}(E|\tilde{Y}, G) || \mathrm{P}(E|\tilde{Y}, G)\right]}_{\mathrm{Divergence \ between \ variational \ posterior \ and \ prior}} - \nonumber \\
    & \underbrace{\mathrm{KL}\left[\mathrm{Q}_{\Phi_{\tilde{Y}}}(\tilde{Y}|G) || \mathrm{P}(\tilde{Y}|G)\right]}_{\mathrm{Divergence \ between \ variational \ posterior \ and \ prior}}.
\end{align}
\end{proof}

\subsection{Proof of Proposition~\ref{prop:gen_baye_inf}}
\label{proof:prop_gen_baye_inf}
\begin{proposition}
    (Restatement of Proposition~\ref{prop:gen_baye_inf}) The optimization objective in the E-step can be transformed into:
    \begin{align}
        \mathcal{L}_{\mathbf{E_{v1}}} = & -\lambda_{\Phi_{\hat{G}}} H(\mathrm{Q}_{\Phi_{\hat{G}}}(\hat{G}|E, \tilde{Y}, G)) - \lambda_{\Phi_{E}} H(\mathrm{Q}_{\Phi_{E}}(E|\tilde{Y}, G)) - \nonumber \\ 
        &\lambda_{\Phi_{\tilde{Y}}} H(\mathrm{Q}_{\Phi_{\tilde{Y}}}(\tilde{Y}|G)),
    \end{align}
    where $H(\cdot)$ is the entropy function.
\end{proposition}

Adhering the original definition of KL divergence, MoleOOD~\cite{moleood_22_yang} assumes uniform or Gaussian distributions as the prior distribution of latent variables, and optimizes the posterior distribution $\mathrm{Q}$. However, the assumed prior distribution does not align with the true prior distribution of latent variables. Such an approach not only requires additional computations but also fails to infer plausible posterior distribution of latent variables.

Due to the absence of reliable methods for inferring prior distribution on graph-level tasks, this paper introduces the theory of generalized Bayesian inference to optimize the latent variables.

\begin{proof}
The optimization objective at E-step is: 
\begin{align}
    \min & \Bigg[ \mathrm{KL}\left[\mathrm{Q}_{\Phi_{\hat{G}}}(\hat{G}|E, \tilde{Y}, G) || \mathrm{P}(\hat{G}|E, \tilde{Y}, G)\right] + \nonumber \\ 
    & \mathrm{KL}\left[\mathrm{Q}_{\Phi_{E}}(E|\tilde{Y}, G) || \mathrm{P}(E|\tilde{Y}, G)\right] + \nonumber \\
    & \mathrm{KL}\left[\mathrm{Q}_{\Phi_{\tilde{Y}}}(\tilde{Y}|G) || \mathrm{P}(\tilde{Y}|G)\right] \Bigg] 
\end{align}
\begin{align}
    \Leftrightarrow \min &  \Bigg[ \int \mathrm{Q}(\hat{G} | E, \tilde{Y}, G) \log \mathrm{Q}(\hat{G} | E, \tilde{Y}, G) d\hat{G} + \nonumber \\ 
    & \int \mathrm{Q}(E | \tilde{Y}, G) \log \mathrm{Q}(E | \tilde{Y}, G) dE + \nonumber \\
    & \ \int \mathrm{Q}(\tilde{Y} | G) \log \mathrm{Q}(\tilde{Y} | G) d\tilde{Y}  - \nonumber \\ 
    &\int \mathrm{Q}(\hat{G} | E, \tilde{Y}, G) \log \mathrm{P}(\hat{G} | E, \tilde{Y}, G) d\hat{G} - \nonumber \\
    & \ \int \mathrm{Q}(E | \tilde{Y}, G) \log \mathrm{P}(E | \tilde{Y}, G) dE - \nonumber \\ 
    &\int \mathrm{Q}(\tilde{Y} | G) \log \mathrm{P}(\tilde{Y} | G) d\tilde{Y} \Bigg].
\end{align}

According to theorems of generalized Bayesian variation inference~\cite{gvi_19_knoblauch,vbem_03_bernardo,gbi_68_dempster,va4bi_08_tzikas,advi_19_zhang}, the KL divergence in the optimization objective could be generalized to convex functions. Hence, the optimization objective at E-step is transformed to
\begin{align}
     \Rightarrow \min \Bigg[ &\lambda_{\Phi_{\hat{G}}} f\left(\mathrm{Q}_{\Phi_{\hat{G}}}(\hat{G}|E, \tilde{Y}, G)\right) + \lambda_{\Phi_{E}} f\left(\mathrm{Q}_{\Phi_{E}}(E|\tilde{Y}, G)\right) + \nonumber \\
    &\lambda_{\Phi_{\tilde{Y}}} f\left(\mathrm{Q}_{\Phi_{\tilde{Y}}}(\tilde{Y}|G)\right) \Bigg]
\end{align}
\begin{align}
    \Rightarrow \min \Bigg[ &\lambda_{\Phi_{\hat{G}}} \int \mathrm{Q}_{\hat{G}}(\hat{G} | E, \tilde{Y}, G) \log \left[\mathrm{Q}_{\hat{G}}(\hat{G} | E, \tilde{Y}, G)\right] d\hat{G} + \nonumber \\
    & \lambda_{\Phi_{E}} \int \mathrm{Q}_{E}(E | \tilde{Y}, G) \log \left[\mathrm{Q}_{E}(E | \tilde{Y}, G)\right] dE + \nonumber \\
    & \lambda_{\Phi_{\tilde{Y}}} \int \mathrm{Q}_{\tilde{Y}}(\tilde{Y} | G) \log \left[\mathrm{Q}_{\tilde{Y}}(\tilde{Y} | G)\right] d\tilde{Y} \Bigg]
\end{align}
\begin{align}
    \Leftrightarrow \min \Bigg[ &-\lambda_{\Phi_{\hat{G}}}H(\mathrm{Q}_{\Phi_{\hat{G}}}(\hat{G}|E, G, \tilde{Y})) - \lambda_{\Phi_{E}}H(\mathrm{Q}_{\Phi_{E}}(E|G, \tilde{Y})) - \nonumber \\
    &\lambda_{\Phi_{\tilde{Y}}}H(\mathrm{Q}_{\Phi_{\tilde{Y}}}(\tilde{Y}|G)) \Bigg].
\end{align}
\end{proof}




\section{Pseudocode of DEROG}
\label{apd:p_code}
The pseudocode of \algo\ is shown in Algorithm \ref{alg:p_code_derog}.

\begin{algorithm} 
\caption{Pseudo code of DEROG}
\label{alg:p_code_derog}
\begin{algorithmic}[1]
    \STATE {\bfseries Input:} Traininhg data $\{ G_i, Y_i \}_{i=1}^N$, batch size $|G|$, number 
    
    of nodes in a batch $|V|$, embedding size $d$, number of 
    
    classes of labels $c$, number of epoch $e$.
    \STATE Initialize $\Phi_{\tilde{Y}}$, $\Phi_{E}$, $\Phi_{\hat{G}}$, $\theta_{Y}$.
    \FOR{$i=1$ {\bfseries to} $e$}
    \STATE Sample a batch $\{ G_j, Y_j \}_{j=1}^b$;
    \STATE Generate pseudo-labels $\mathrm{Q}_{\Phi_{\tilde{Y}}} (\tilde{Y}|G) = \mathrm{MLP}(\mathrm{Readout}(\mathrm{GNN}(G))) \in \mathbb{R}^{|G| \times c}$;
    \STATE Infer latent environments $E = \mathrm{Readout}(\mathrm{GRL}(\mathrm{GNN}[G, \mathrm{Linear}(\tilde{Y})])) \in \mathbb{R}^{|G|\times d}$;
    \STATE Predict graph rationales $\hat{G} = \mathrm{Sigmoid}(\mathrm{GNN}[G, E, \mathrm{Linear}(\tilde{Y})]) \in \mathbb{R}^{|V| \times d}$;
    \STATE Calculate $\mathcal{L}_{\mathbf{E}} = \frac{1}{|G|}\sum_{G}[-\lambda_{\Phi_{\hat{G}}}H(\mathrm{Q}_{\Phi_{\hat{G}}}(\hat{G}|E, G, \tilde{Y})) - \lambda_{\Phi_{E}}H(\mathrm{Q}_{\Phi_{E}}(E|G, \tilde{Y})) - \lambda_{\Phi_{\tilde{Y}}}H(\mathrm{Q}_{\Phi_{\tilde{Y}}}(\tilde{Y}|G))]$;
    \IF{Model is \algo-v2}
    \STATE Calculate environment alignment $\mathcal{L}_{env} = \frac{1}{|G|}\sum_{G}\mathrm{CrossEntropy}(\mathrm{Linear}(E), Y_E)$;
    \STATE Sample anchor node $V_1$ and positive node $v_2$, and $k$ negative nodes.
    \STATE Calculate contrastive loss $\mathcal{L}_{cl} = - \frac{1}{|G|}\sum_{G} \log \frac{\exp (h_{1}^T h_{2} / \tau)}{\sum_{v_{j} \in \mathcal{V}_{-}} \exp (h_{1}^T h_{j} / \tau)}$;
    \STATE Calculate $\mathcal{L}_{\mathbf{E}} = \mathcal{L}_{\mathbf{E}} + \lambda_{env} \mathcal{L}_{env} + \lambda_{cl} \mathcal{L}_{cl}$;
    \ENDIF
    \STATE Update $\Phi_{\tilde{Y}}$, $\Phi_{E}$, $\Phi_{\hat{G}}$;
    \STATE Infer latent environments $E$ and graph rationales $\hat{G}$ 
    
    without gradient;
    \STATE Fuse $E$ and $\hat{G}$ into initialized features of nodes in $\theta_Y$ $\mathbf{H}_{Y}^{0} = (\mathbf{H}_{Y}^0 \odot \hat{G}, E, \mathrm{\tilde{Y}}) \in \mathbb{R}^{|V| \times d}$;
    \STATE Predict graph labels $\hat{Y} = \mathrm{Readout}(\mathrm{Pool}(\mathbf{H}_Y^L \odot \hat{G}))$, 
    
    where $\mathbf{H}_{Y}^{L} = \mathrm{GNN}(G) \in \mathbb{R}^{|V| \times d}$;
    \STATE Calculate $\mathcal{L}_{\mathbf{M}} = \frac{1}{|G|}\sum_{(G, Y)} \mathrm{CrossEntropy}(Y_T, \hat{Y})$;
    \STATE Update $\theta_Y$
    \ENDFOR
\end{algorithmic}
\end{algorithm}

\section{Implementation Details}
\label{apd:exp_detail}

\subsection{Datasets}
\label{apd:exp_detail_data}
\textbf{GOODHIV}: A real-world molecule dataset derived from MoleculeNet~\cite{moleculenet_18_wu}, which are represented as molecule graphs with atom and bond features. Molecules are divided into various environments according to molecular scaffolds or sizes. The goal is to perform binary classification on molecule graphs based on whether they have the potential of suppressing HIV.

\textbf{DrugOOD-LBAP-core-IC50}: A molecule dataset adapted from DrugOOD~\cite{drugood_23_ji}. The task is to perform binary classification for binding affinity with target proteins with the measurement of IC50. Apart from the two splitting strategies adopted in GOODHIV, molecules are split according to binding assays as well. 

\textbf{GOODTwitter \& GOODSST2}: Two NLP datasets~\cite{eplgnnsurvey_23_yuan,twitter_14_dong} for sentiment analysis, where sentences are broken down into tokens with features obtained from BERT~\cite{bert_19_devlin}. Samples are split into different environments based on lengths of them. We investigate original texts of GOODTwitter and find out that the dataset itself is extremely noisy. There exist a significant amount of sarcastic and ambiguous expressions that are mistakenly labeled. Such phenomenon is addressed by other research~\cite{leci_23_gui} as well.

\textbf{GOODMotif}: A synthetic dataset derived from Spurious-Motif \cite{dir_22_wu}, where each sample is composed of a motif and a spurious graph. The label is solely dependent on the motif. Samples are categorized based on motifs or overall sizes.

\textbf{GOODCMNIST}: A semi-synthetic dataset derived from MNIST, a widely adopted dataset of hand-written digits. By assigning different colors to digits~\cite{good_22_gui}, digits are split into different categories to simulate covariate or concept shift.

GOOD datasets use the MIT license and DrugOOD dataset uses GPL3.0. Statistics of the adopted dataset are shown in Table~\ref{tab:num_graph}.

\begin{table*}[tb]
    \centering
    \caption{Number of graphs in the adopted datasets.}
    \label{tab:num_graph}
    \resizebox{1.00\textwidth}{!}{
        \begin{tabular}{cccccccccccc}
        \toprule
        \multicolumn{2}{c}{\textbf{Shift Type}} & \multicolumn{5}{c}{Covariate} & \multicolumn{5}{c}{Concept}  \\
        \cmidrule(r){1-2} \cmidrule(lr){3-7} \cmidrule(l){8-12}
        \textbf{Dataset} & \textbf{Domain} & Train & ID Val & ID Test & OOD Val & OOD Test & Train & ID Val & ID Test & OOD Val & OOD Test \\
        \midrule
        \multirow{2}{*}{GOODHIV} & Scaffold & 24692 & 4112 & 4112 & 4113 & 4108 & 15209 & 3258 & 3258 & 9365 & 10037 \\
        & Size & 26169 & 4112 & 4112 & 2773 & 3961 & 14454 & 3096 & 3096 & 9956 & 10525 \\
        \midrule
        \multirow{3}{*}{LBAPcore} & Scaffold & 21519 & 4920 & 30708 & 19041 & 19048 & - & - & - & - & - \\
        & Size & 36597 & 12153 & 12411 & 17660 & 16415 & - & - & - & - & - \\
        & Assay & 34179 & 11314 & 11683 & 19028 & 19032 & - & - & - & - & - \\
        \midrule
        GOODTwitter & Length & 2590 & 554 & 554 & 1785 & 1457 & 2595 & 555 & 555 & 1530 & 1705 \\
        \midrule
        GOODSST2 & Length & 24744 & 5301 & 5301 & 17206 & 17490 & 27270 & 5843 & 5843 & 15142 & 15944 \\
        \midrule
        \multirow{2}{*}{GOODMotif} & Basis & 18000 & 3000 & 3000 & 3000 & 3000 & 12600 & 2700 & 2700 & 6000 & 6000 \\
        & Size & 18000 & 3000 & 3000 & 3000 & 3000 & 12600 & 2700 & 2700 & 6000 & 6000 \\
        \midrule
        \multirow{2}{*}{GOODCMNIST} & Background & 42000 & 7000 & 7000 & 7000 & 7000 & 29400 & 6300 & 6300 & 14000 & 14000 \\
        & Color & 42000 & 7000 & 7000 & 7000 & 7000 & 29400 & 6300 & 6300 & 14000 & 14000 \\        
        \bottomrule
        \end{tabular}
        }
\end{table*}

\subsection{Platform}
\label{apd:platform}
All the experiments are conducted using PyTorch 1.10 and PyG 2.0.4, and the hardware is 128 AMD EPYC 7763 64-Core Processor, NVIDIA A800 80GB PCIe.

\subsection{Details}
In terms of the choice for GNN encoder, we adopt GIN and GIN with virtual nodes for DEROG and all the baselines. For those datasets with edge features, these features are embedded to the same dimensionality as node features at each graph convolutional layer. 

Note that for initial features with the dimension of 768 in NLP datasets, graph rationales $\hat{G}$ are mapped to the same latent space by a linear layer before the calculation of $\mathrm{Emb}(X) \odot \hat{G}$ in the final classifier.

\subsection{Hyperparameter Settings}
\label{apd:hyper_param_set}

In the proposed \algo, $\lambda_{\Phi_{\hat{G}}}$ and $\lambda_{\Phi_{E}}$ are set to 0.01, $k$ and $\tau$ in contrastive loss is constrained to 2 and 0.1, respectively. The remaining hyperparameters are $\lambda_{\Phi_{\tilde{Y}}}$, $\lambda_{env}$, and $\lambda_{cl}$. Other hyperparameters are searched within $\lambda_{\Phi_{\tilde{Y}}} \in \left[0.1, 1 \right]$, $\lambda_{env} \in \left[0.01, 0.1 \right]$, $\lambda_{cl} \in \left[0.1, 1 \right]$.

In addition to the parameters mentioned above, we keep the rest parameters the same for \algo\ and all baselines for fair comparisons. We set the learning rate to 0.0001, the weight decay to 0.0001, the hidden dimension to 300, and the maximum number of epoch to 200. 

\section{Addition Experiments}
\label{apd:add_exp}

\subsection{Synthetic Scenarios}
\label{apd:syn_scene}
In addition to real-world scenarios, we also compare performance of \algo\ and all the baselines. Results are shown in Table~\ref{tab:syn_results}. Though \algo\ does not consistently outperform other baselines in all baselines, it manages to achieve the highest ranking on average. 

In the scenario where \algo\ ranks first, it exhibits a significant advantage over other methods. However, in several benchmarks, there is a notable gap between \algo\ and methods that impose strong independence assumptions on the environment, invariant sub-graphs, and labels. Those with such assumptions can naturally fit for graph OOD generalization in synthetic scenarios since ``Variation consistency'' and ``Variation sufficiency'' proposed by GALA~\cite{gala_23_chen} are 
easily satisfied.

\begin{table*}[htb]
    \centering
    \caption{Performance comparisons in synthetic scenarios.}
    \label{tab:syn_results}
    \resizebox{1.00\textwidth}{!}{
    \begin{tabular}{lccccccccc}
    \toprule
    Dataset & \multicolumn{4}{c}{GOODMotif} & \multicolumn{4}{c}{GOODCMNIST} & \multirow{3}{*}{Avg. Rank} \\
    \cmidrule(r){1-1} \cmidrule(l){2-9}
    Domain & \multicolumn{2}{c}{Basis} & \multicolumn{2}{c}{Size} & \multicolumn{2}{c}{Background} & \multicolumn{2}{c}{Color} & \\
    \cmidrule(r){1-1} \cmidrule(lr){2-5} \cmidrule(l){6-9}
    Shift Type & Covariate & Concept & Covariate & Concept & Covariate & Concept & Covartiate & Concept & \\
    \midrule
    ERM & 69.97(2.54) & 81.25(0.43) & 56.02(6.93) & \underline{63.46(5.16)} & 18.02(1.63) & \textbf{31.00(1.90)} & 25.61(4.33) & 36.07(0.25) & \underline{5.75} \\ 
    IRM & 40.39(0.82) & 85.82(0.83) & 58.10(5.35) & 33.61(1.16) & 19.53(2.01) & 29.54(2.44) & 27.74(3.56) & \underline{42.83(0.33)} & 6.25 \\ 
    VREx & 42.04(2.19) & 83.34(0.63) & 58.95(4.96) & 32.32(0.00) & 13.89(1.76) & 23.62(2.27) & 26.17(1.20) & 38.14(0.27) & 9 \\ 
    DANN & 41.20(0.69) & 86.50(0.33) & 60.19(3.80) & 32.34(0.03) & 15.76(2.05) & 28.10(0.58) & 28.28(1.59) & 42.62(0.79) & 7.375 \\ 
    \midrule
    DIR & 39.44(2.55) & 81.62(1.85) & 45.89(4.38) & 51.63(5.09) & 14.59(4.66) & 16.33(0.48) & 32.39(4.14) & 25.90(3.01) & 11.375 \\ 
    GSAT & 52.76(2.31) & 51.98(1.62) & 55.28(3.68) & 62.95(1.47) & 16.45(1.89) & 28.23(0.40) & \underline{38.26(9.58)} & \textbf{46.72(0.93)} & 6.25 \\ 
    CIGA & 43.07(3.95) & 73.79(4.93) & 53.22(3.14) & 61.03(9.27) & 16.44(4.11) & 21.88(0.79) & 25.10(5.29) & 30.92(2.61) & 10.25 \\ 
    GALA & 45.25(2.25) & 77.26(4.88) & 42.98(2.19) & 56.17(8.48) & 19.29(2.02) & 18.05(3.41) & 19.68(2.69) & 26.30(1.08) & 10.75 \\ 
    AIA & 70.62(2.41) & 75.44(0.84) & 49.53(7.08) & 60.27(9.97) & 15.95(2.82) & \underline{29.83(9.39)} & 25.98(7.13) & 36.65(9.95) & 7.875 \\ 
    iMoLD & 71.30(1.42) & 72.62(1.27) &52.01(4.25) & 63.36(1.08) & 10.22(0.93) & 25.80(0.62) & 25.67(8.37) & 38.25(0.35) & 8.5 \\ 
    EQuAD & 65.25(4.70) & 74.41(3.84) & 58.26(2.01) & 42.65(3.46) & 18.68(1.79) & 20.92(2.24) & 25.21(6.44) & 35.37(2.82) & 9 \\ 
    IGM & \underline{72.60(2.76)} & 75.06(0.57) & 52.64(5.69) & 58.26(2.30) & 17.27(0.93) & 27.83(1.06) & 26.17(2.16) & 29.05(1.87) & 7.875 \\ 
    LECI & \textbf{85.81(9.44)} & 80.55(0.33) & 50.37(9.01) & \textbf{65.24(1.83)} & 16.78(2.59) & 17.82(0.59) & \textbf{56.52(9.61)} & 20.90(1.61) & 7.5 \\ 
    \midrule
    \algo-v1 & 44.94(0.63) & \underline{90.15(1.05)} & \underline{75.52(1.90)} & 32.89(0.45) & \underline{24.39(0.56)} & 22.03(0.39) & 33.71(0.04) & 30.66(0.46) & 6.75 \\ 
    \algo-v2 & 48.14(2.31) & \textbf{90.84(0.64)} & \textbf{75.81(1.66)} & 33.00(0.03) & \textbf{24.47(0.77)} & 22.27(0.47) & 34.03(0.38) & 31.19(0.88) & \textbf{5.5} \\ 
    \bottomrule
    \end{tabular}
    }
\end{table*}

\begin{figure*}[tb]
    \begin{center}
    \centerline{\includegraphics[width=0.9\linewidth]{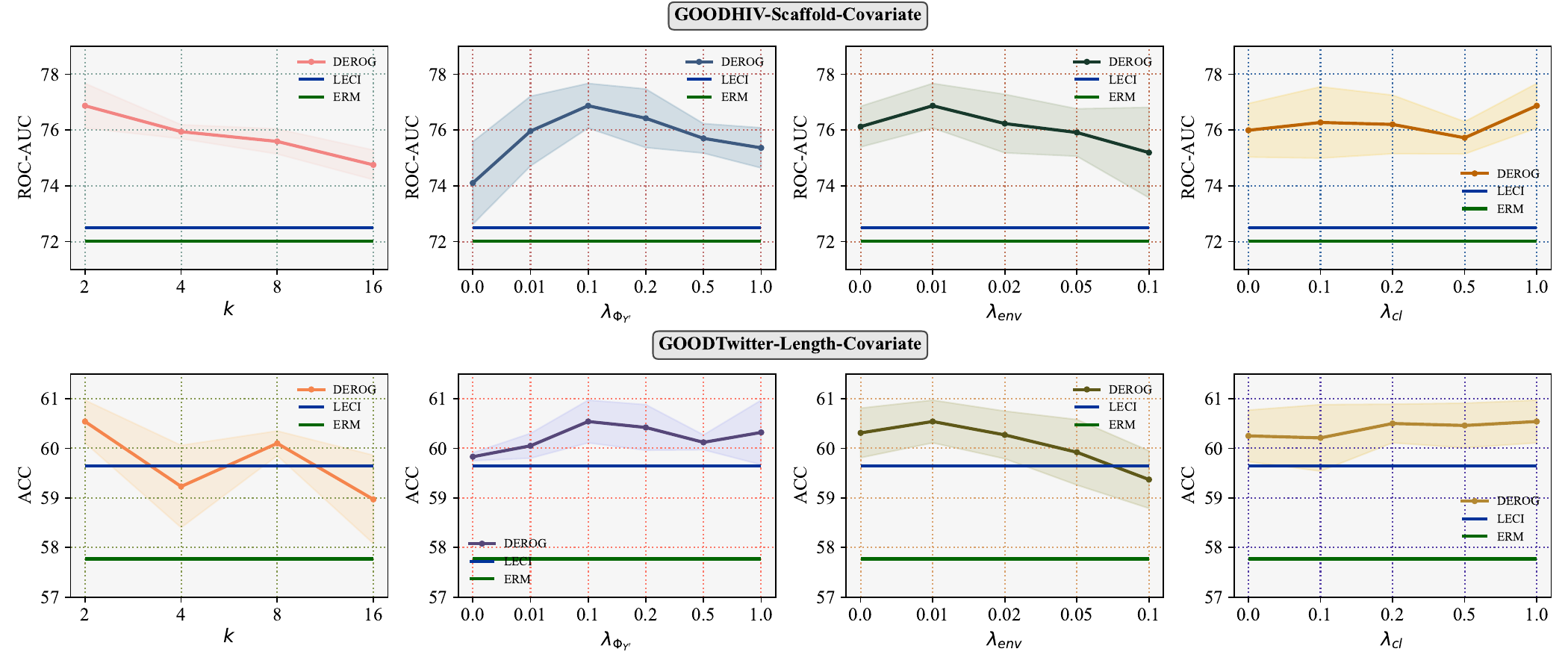}}
    \centerline{\includegraphics[width=0.9\linewidth]{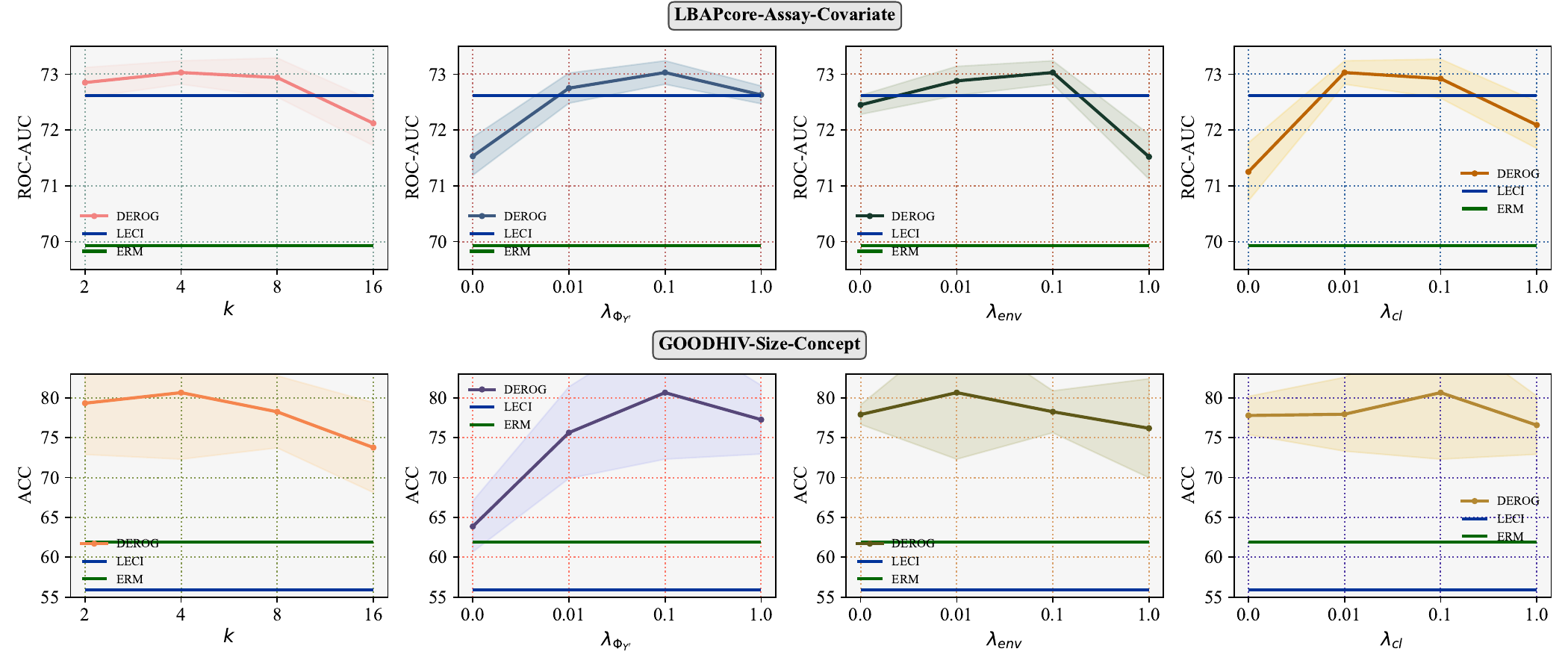}}
    \caption{Performance of \algo\ under various hyperparameter settings.}
    \label{fig:param_sen}
    \end{center}
\end{figure*}

\subsection{Details of Ablation Studies}
In Section~\ref{subsec:abl_stu}, seven variants 
are implemented to
study the importance of key components in \algo.
Here we provide implementation details of these variants.

The ``w/ OBI'' means \algo\ with original Bayesian inference, where latent environments and graph rationales are optimized based on KL divergence. We adopt standard Gaussian distribution and uniform distribution as the prior distribution of latent variables, respectively. Results suggest that ``w/o OBI'' performs better when standard Gaussian distribution is employed.

The ``w/o EM'' means parameters are updated all at once. The ``w/o EM'' and \algo-2 have identical training objectives.

The ``w/o $\mathcal{L}_{env}$'' and ``w/o $\mathcal{L}_{cl}$'' each represents the variant which removes environment alignment or contrastive loss.

The ``w/o $H(\tilde{Y})$'' stands for the variatnt where $\tilde{Y}$ is not optimized by its negative entropy. Parameters in the pseudo-label classifier is entirely updated by the loss term of latent environment generator and graph rationale extractor.

The ``w/o GRL'' is \algo\ with no gradient reverse layer attached after latent environment generator and graph rationale extractor. 

The ``w/o $E$/$\hat{G}$/$E \& \hat{G}$'' represent the variants where latent environments, graph rationales, or both are replaced with standard Gaussian noises.

\begin{figure*}[tb]
    \vskip -0.1in
    \begin{center}
    \centerline{\includegraphics[width=0.9\linewidth]{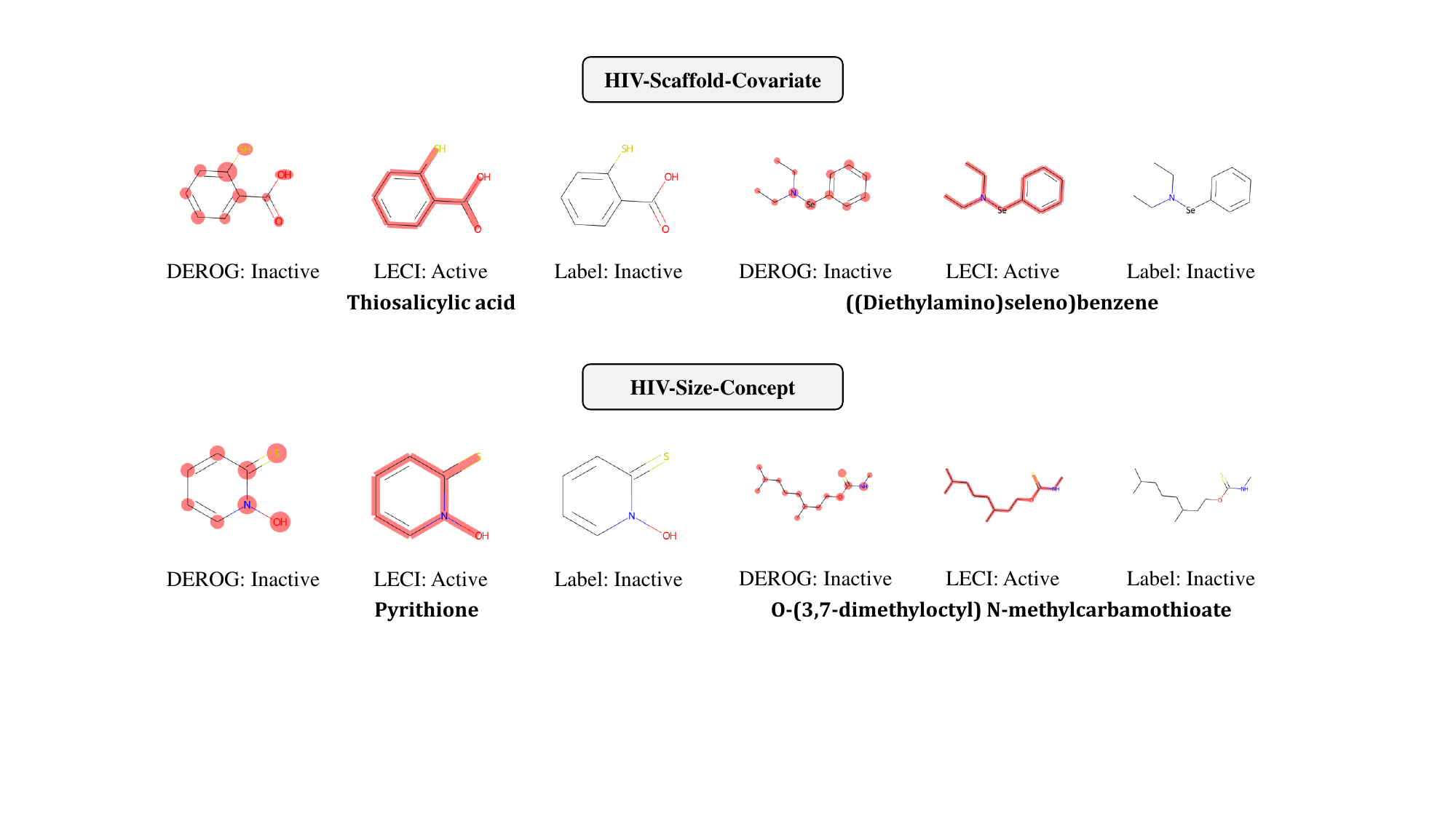}}
    \vskip -0.6in
    \caption{Case analysis of \algo\ and LECI on GOODHIV benchmark.}
    \label{fig:case_ana}
    \end{center}
    \vskip -0.3in
\end{figure*}

\section{Hyperparameter Sensitivity Analysis}
\label{subsec:hyper_param}
We further conduct a hyperparameter sensitivity analysis on \algo.
We evaluate the model performance w.r.t. varying values of $k$, $\lambda_{\Phi_{\tilde{Y}}}$, $\lambda_{env}$, and $\lambda_{cl}$ on 
GOODHIV, LBAPcore, and GOODTwitter.
Here, $k$ is the number of negative samples in contrastive loss; $\lambda_{\Phi_{\tilde{Y}}}$, $\lambda_{env}$, and $\lambda_{cl}$ represent the weighting factor of $H(\mathrm{Q}_{\Phi_{\tilde{Y}}}(\tilde{Y}|G))$ in Eq.~\ref{eq:entropy}, the terms $\mathcal{L}_{env}$ and $\mathcal{L}_{cl}$ in Eq.~\ref{eq:func_v2}, respectively. 
In our experiments, we vary one hyperparameter with others fixed.
For better comparison, two representative baselines LECI and ERM are included as reference. 
Specifically, LECI is the runner-up method while ERM is a widely adopted method for OOD problem. The results are shown in Fig.~\ref{fig:param_sen}.
Despite the existence of some fluctuations, our method still outperforms runner-up baselines on a wide range of hyperparameter values. These results demonstrate the insensitivity of our method towards hyperparameters. 

\section{Case Analysis}
\label{sec:case_ana}
Unlike previous methods that show examples of the segmentation of causal and non-causal sub-graphs on handcrafted datasets, our approach cannot similarly visualize the invariance within specific molecules due to their complexity. However, we selected cases where predictions of DEROG are correct but those of LECI failed in two molecular datasets for analysis. There is a tendency for LECI to misclassify ineffective drugs as effective. In Figure~\ref{fig:case_ana}, we colored the nodes in the samples predicted by DEROG based on the L1 norm of graph rationale, and we colored the edges in the samples predicted by LECI according to the edge weight masks. After double-checking all four samples on PubChem, we find out that all of them are unable to inhibit the HIV virus, while Thiosalicylic acid and Pyrithione failed the test because of their toxicity. It can be observed that our model is capable of distinguishing certain functional groups while ensuring that other parts of the molecule still have an impact on label prediction.

\section{Limitations}
A potential limitation of this paper is that a thorough understanding of the example presented in Section~\ref{sec:introduction} and the case analysis provided in Appendix~\ref{sec:case_ana} may require a certain level of domain knowledge, particularly within the field of computational chemistry. This could potentially restrict the accessibility of this paper to readers who are not familiar with the underlying application context. We have made our best efforts to explain that some previous assumptions may not hold in realistic OOD scenarios. We sincerely appreciate the reader's patience and understanding in this regard.

Another potential limitation is that the proposed $\mathcal{L}_{env}$ requires environment labels during training, which may not be available in certain practical scenarios. While the removal of the term could slightly degrade the model performance,
our method can still perform well compared to SOTA baselines.

\section{Broader Impact}
This paper focuses on the generalization capabilities of graph learning algorithms on real-world out-of-distribution graphs. Findings of this study and the proposed method have broader impacts and potential applications in various real-world scenarios, such as AI-assisted drug discovery, text sentiment analysis, and other graph-level learning tasks. The improvement of the generalization capability of graph learning methods could contribute to advancements in fields that rely on graph-based analysis and decision-making.


\end{document}